\newtheorem{lemma}{Lemma}
\newtheorem{definition}{Definition}
\newcommand{\be}{\begin{equation}}
\newcommand{\ee}{\end{equation}}
\definecolor{Gray}{gray}{0.85}
\definecolor{LightCyan}{rgb}{0.88,1,1}
\newcolumntype{a}{>{\columncolor{Gray}}c}
\newcolumntype{b}{>{\columncolor{white}}c}
\DeclareMathOperator*{\argsup}{arg\,sup}
\def\@onedot{\ifx\@let@token.\else.\null\fi\xspace}
\DeclareRobustCommand\onedot{\futurelet\@let@token\@onedot}
\newcommand\numberthis{\addtocounter{equation}{1}\tag{\theequation}}
\newcommand{\R}{\mathbb{R}}
\newcommand{\E}{\mathbb{E}}
\newcommand{\Eb}{\mathbb{E}}
\newcommand{\Nc}{{\mathcal{N}}}
\newcommand{\Zc}{\mathcal{Z}}
\newcommand{\Yc}{{\mathcal{Y}}}
\begin{document}

\twocolumn[

\aistatstitle{A Framework for Sample Efficient Interval Estimation  with Control Variates}

\aistatsauthor{ Shengjia Zhao \And Christopher Yeh \And  Stefano Ermon }

\aistatsaddress{ Stanford University \And Stanford University \And Stanford University } ]

\begin{abstract}
We consider the problem of estimating confidence intervals for the mean of a random variable, where the goal is to produce the smallest possible interval for a given number of samples. While minimax optimal algorithms are known for this problem in the general case, improved performance is possible under additional assumptions. In particular, we design an estimation algorithm to take advantage of side information in the form of a control variate, leveraging order statistics. Under certain conditions on the quality of the control variates, we show improved asymptotic efficiency compared to existing estimation algorithms. Empirically, we demonstrate superior performance on several real world surveying and estimation tasks where we use the output of regression models as the control variates.
\end{abstract}

\section{Introduction}

Many real world problems require estimation of the mean of a random variable from unbiased samples. In high risk applications, the estimation algorithm should output a confidence interval, with the guarantee that the true mean belongs to the interval with e.g. 99\% probability. The classic tools for this task  are concentration inequalities, such as the Chernoff, Bernstein, or Chebychev inequalities~\citep{hoeffding1962probability,bernstein1924modification,vershynin2010introduction}.
% each applicable under certain set of assumptions. 

However, for many tasks, obtaining unbiased samples is expensive. For example, when estimating demographic quantities, such as income or political preference, drawing unbiased samples can require field survey. To make the situation worse, often we seek more granular estimates, e.g. for each individual district or county, meaning that we need to draw a sufficient number of samples for every district or county. Another difficulty can arise when we need high accuracy (i.e. a small confidence interval), since confidence intervals produced by typical concentration inequalities have size $O(1/\sqrt{\mathrm{number\ of\ samples}})$. In other words, to reduce the size of a confidence interval by a factor of 10, we need 100 times more samples.

This dilemma is generally unavoidable because concentration inequalities such as Chernoff or Chebychev are minimax optimal: there exist distributions for which these inequalities cannot be improved. No-free-lunch results~\citep{van2000asymptotic} imply that any alternative estimation algorithm that performs better (i.e. outputs a confidence interval with smaller size) on some problems, must perform worse on other problems. Nevertheless, we can identify a subset of problems where better estimation algorithms are possible. 

We consider the class of problems where we have side information. We formalize side information as a random variable with known expectation and whose value is close (with high probability) to the random variable we want to estimate. Following the terminology in the Monte Carlo simulation literature, we call this side information random variable a ``control variate''~\citep{lemieux2017controlvariates}. Instead of the original estimation task, we can estimate the expected difference between the original random variable and the control variate. The hope is that the distribution of this difference is concentrated around $0$.
%then it is possible to obtain better (smaller sized) confidence intervals. 
%If our a-priori guess is very close to the true mean, then we only have to bound the difference between the a-priori guess and the true mean, and the difference should be close to 0.
Some estimation algorithms output very good (small sized) confidence intervals for distributions concentrated around $0$ compared to classic methods such as Chernoff bounds. %\s{can perform better than chernoff..}
%is close to 0, but perform worse when the true mean is far from 0. {\color{red} Do you mean when the difference is close to 0 instead of the true mean is close to 0?} %Therefore, we can hope to achieve These estimators will have better performance if we have a good guess of the true mean. %Whether we have a good guess close to the true mean will determine whether these estimators perform better or worse. 

Many practical problems have very good control variates. One important class of problems is when we have a predictor for the random variable, and we can use the prediction as the control variate.
%For example, to estimate the mean squared error of a regression function or the expected 0-1 error of a classifier, we can guess a priori that the mean of the error is $0$ if we believe the regression/classification function to be accurate. 
For example, if we would like to estimate a neighborhood's average voting pattern, then we might have a prediction function for political preference from Google Street View images of that neighborhood~\citep{gebru13108streetview}; if we would like to estimate the average asset wealth of households in a certain geographic region, we might have a regressor that predicts income from satellite images~\citep{jean2016combining}. These classifiers could be trained on past data (e.g. previous year survey results) or similar datasets, or they could even be crafted by hand. %Although we may be unsure about the accuracy of these predictors, we can still use it as a control variate. %we can use their output as a guess.  % and use runbiased samples from 
%When estimating the average income of a district, suppose we have satellite images and a pretrained predictor of income from satellite images, we can use the output of the predictor as our guess. 

For these problems we propose concentration bounds based on order statistics. In particular, we draw a connection between the recently proposed concept of \emph{resilience}~\citep{steinhardt2018robust} and concentration bounds on order statistics. We show how to use these concentration inequalities to design estimation algorithms that output better confidence intervals when we have a good control variate (e.g. output confidence intervals of size $O(1/\mathrm{number\ of\ samples})$). 

% (Chapter 23), if . Our proposed estimation algorithm performs better if our a-priori guess on the mean is good, and performs worse if our a-priori guess is poor. 
Our proposed estimation algorithm always produces valid confidence intervals, i.e. the true mean belongs to the interval with a specified probability. The only risk is that when the control variate is poor, the confidence interval could be worse (larger) than classic baselines such as Chernoff inequalities. We empirically show superior performance of the proposed estimation algorithm on three real world tasks: bounding regression error, estimating average wealth with satellite images, and estimating the covariance between wealth and education level. % that for many real world estimation tasks, the proposed estimation algorithm outputs better confidence intervals, especially when the number of unbiased samples is small.   
\newcommand{\mean}{{\mu_{\mathrm{mean}}}}
\newcommand{\tY}{{\tilde{Y}}}
\newcommand{\ty}{{\tilde{y}}}

\section{Problem Setup}

Our objective is to estimate the mean of some random variable $Y$ taking values in $\Yc \subseteq \R^d$. Given i.i.d. samples $y_1, \dotsc, y_m \sim Y$ and some choice of confidence level $\zeta \in (0, 1)$, an estimation algorithm outputs $\hat{\mu} \in \R^d$ and an confidence interval size $c \in \R^+$.
The estimation algorithm must satisfy
%along with an interval size $c \in \R^+$ such that the estimation error is less than $c$ with probability $1-\zeta$. 
\begin{equation} \Pr\left[ \lVert \hat{\mu} - \E[Y] \rVert > c \right] \leq \zeta \label{eq:problem_def} \end{equation}
where the probability $\Pr$ is with respect to the random samples of $y_1, \dotsc, y_m$ and any additional randomness in the execution of the (randomized) algorithm, and $\lVert \hat{\mu} - \E[Y] \rVert$ is any choice of semi-norm.

We will first focus on one dimensional problems where $Y \in \mathbb{R}$, and choose the norm $\lVert \hat{\mu} - \E[Y] \rVert = \lvert \hat{\mu} - \E[Y] \rvert$, then extend several results to more general setups. 

\subsection{Baseline Estimators and Optimality}
\label{sec:baseline}

The classical approach is to estimate $\E[Y]$ with the empirical mean $\mean = \frac{1}{m}\sum_{i=1}^m y_i$, and its estimation error $ \lvert \mean - \E[Y] \rvert$ 
can be controlled using 
%is guaranteed by central limit 
concentration inequalities.

To obtain concentration inequalities, we need some assumptions on $\Yc$. For example, if there is some very small probability that $Y = +\infty$, then $\E[Y]$ is unbounded, but $\mean$ can be finite, which makes it impossible to bound $\lvert \mean - \E[Y] \rvert$. Common assumptions include sub-Gaussian, bounded moments, sub-exponential, etc~\citep{vershynin2010introduction}. We will consider the sub-Gaussian and bounded moments assumptions in this paper.

\textbf{Sub-Gaussian}: If we assume $\forall t \in \R,\ \E[e^{t(Y-\Eb[Y])}] \leq e^{\sigma^2 t^2/2}$, then Chernoff-Hoeffding is the classic concentration inequality for confidence interval estimation~\citep{hoeffding1962probability}
\begin{align*}
    \Pr\left[ \lvert \hat{\mu} - \E[Y] \rvert > \sqrt{\frac{2\sigma^2}{m}\log \frac{2}{\zeta}} \right] \leq \zeta.
\end{align*}

In particular, when $Y$ is supported on $[a, b]$ we have 
\begin{align*}
    \Pr\left[ \left\lvert \hat{\mu} - \E[Y] \right\rvert > \sqrt{\frac{(b-a)^2}{2m}\log \frac{2}{\zeta}} \right] \leq \zeta.
\end{align*}
%his inequality has a more complex counterpart with better constants~\citep{hoeffding1994probability} Eq.(2.1). We use the better version in the experiments. 

\textbf{Bounded Moments}: Another common assumption is that the $k$-th order moment of $Y$ is bounded, i.e. $\E[\lvert Y - \E[Y] \rvert^k] \leq \sigma^k$ for some $\sigma > 0$. %\s{this is not an example of bounded moments, it's a concentration ineq}\sj{I thought that is the standard def?} \s{i mean this for example statement..}

Under bounded moment assumptions, several concentration inequalities are known, such as the Chebychev inequality, Kolmogorov inequality~\citep{hajek1955generalization} and Bernstein inequality~\citep{bernstein1924modification}. For example, when $Y$ has a bounded second order moment, the Chebychev inequality states the following:
\begin{equation}
    \Pr\left[ \left\lvert \hat{\mu} - \E[Y] \right\rvert \geq \frac{\sigma}{\sqrt{\zeta m}} \right] \leq \zeta
    \label{eq:chebychev}
\end{equation}
% \Pr\left[ \hat{\mu} \leq \E[Y] - \frac{\sigma}{\sqrt{\zeta m}} \right] &\leq \zeta.

It is known that these bounds are (asymptotically in $m$) minimax optimal. There exist random variables $Y$ that satisfy the respective assumptions of each inequality, and the inequality cannot be improved~\citep{hoeffding1962probability}. Therefore, to further improve these estimation algorithms, additional assumptions will be necessary.

\subsection{Control Variates}\label{sec:control_variates}

Suppose $\tY$ is another random variable jointly distributed with $Y$ and that we know its mean $\E[\tY]$ (or have a very accurate estimate of it). %Note that $\tY$ can be stochastic or deterministic \s{is deterministic case interesting?} (with probability 1 takes a fixed value). 
We also have samples drawn from their joint distribution $(y_1, \ty_1), \dotsc, (y_m, \ty_m) \sim Y, \tY$. 

For $\tY$ to be useful for our application, its value needs to be close to $Y$. %\s{not true i think. a deterministically shifted version (even by a lot) would work great}\sj{Yeah for control variates, not for us}
In other words, $Y - \tY$ should be a random variable that is concentrated around $0$. The purpose of this variable $\tY$ is 
similar to 
%often called 
a ``control variate'' in the Monte Carlo community, but we use it here for a different task of interval estimation. 

For example, in our household wealth estimation example, let $Y$ denote the household-level wealth in a randomly sampled village, and let $\tY$ be the predicted household-level wealth based on the satellite image of that village. If our predictor is accurate (i.e. $Y \approx \tY$ with high probability), then $\tY$ could be an effective control variate for $Y$. In addition, we can also estimate $\E[\tY]$ very accurately because a very large number of satellite images 
%(inputs to the regression model) 
are available with little cost~\citep{wulder2012landsat}, so obtaining samples from $\tY$ (without the corresponding $Y$) is inexpensive. 
%$\E[\lVert Y - \tY \rVert]$ will be small. The hope is that we can prove it is small with only a small number of samples. 

We would like to design an estimation algorithm that takes as input the samples $(y_1, \ty_1), \dotsc, (y_m, \ty_m)$ and the known value of $\E[\tY]$, and outputs a $\hat{\mu}$ along with an estimation error $c \in \R^+$ such that Eq.(\ref{eq:problem_def}) holds. The hope is that because of the additional information $\tY$ we are no longer restricted by the mini-max bounds on the confidence interval size $c$, and can output confidence intervals of smaller size.   % then the minimax optimal estimation algorithms. 
%The hope is that we can take advantage of this side information, and obtain a smaller value of estimation error $c$. 

\newcommand{\stuff}{{ \left(\frac{(m+1)^k}{\zeta}\right)^{\frac{1}{m-k}}}} 
\newcommand{\stuffn}{{ \left(\frac{\zeta}{(m+1)^k}\right)^{\frac{1}{m-k}}}} 
\newcommand{\vdual}{{\lVert v \rVert_*}}
\newcommand{\bare}{{\bar{\epsilon}}}
\newcommand{\tF}{{\tilde{F}}}
\newcommand{\tZ}{{\tilde{Z}}}
\newcommand{\zmean}{{\bar{Z}}}

\section{Control Variates Interval Estimation}
% If $\tY$ is a good approximation for $Y$ (i.e. $\lVert \tY - Y \rVert$ is small with high probability, we can use the simple estimation algorithm that outputs $\hat{\mu} = \E[\tY]$. 

\subsection{Estimation by Order Statistics}
\label{sec:estimation_procedure}
% The proposed estimation algorithm rely on the fact that certain interval estimation algorithms perform well when the mean is close to $0$. 

For convenience we denote $Z = Y - \tY$ and $z_i = y_i - \ty_i$. Given the samples $z_1, \dotsc, z_m$ we can compute their order statistics, which are the $m$ samples in sorted order such that $Z_{(1)} \leq \dotsb \leq Z_{(m)}$.

We first study the one dimensional real valued case (i.e. $Z \in \Zc = \R$).
% \textbf{Mean:} $\zmean = \frac{1}{m} \sum_i z_i $

% Based on both types of statistics we can perform control variate interval estimation 

% \subsection{Estimation by Sample Mean}
% \textbf{1)} Set $\hat{\mu} = \E[\tY] + \zmean$. This is the standard approach for using control variates to reduce estimation error. \textbf{2)} 
% Use a efficient concentration inequality to show that the probability of $\lvert \zmean - \E[Z] \rvert \leq c$ is at least $1-\zeta$. 

% Then because
% \begin{align*}
%     \lVert \hat{\mu} - \E[Y] \rVert &= \left\lVert \E[\tY] - \E[Y] + \zmean \right\rVert \\
%     &\leq \left\lVert \E[Z] -\zmean \right\rVert
% \end{align*}
% we can conclude that 
% \begin{align*}
%     \Pr[ \lVert \hat{\mu} - \E[Y] \rVert \geq c] \leq \Pr\left[\left\| \E[Z] - \zmean \right\| \geq c \right].
% \end{align*}
 
Our control variate estimation algorithm consists of three steps: given input samples $z_1, \dotsc, z_m$, $\E[\tY] \in \mathbb{R}$ and desired confidence $\zeta \in (0, 1)$
 
\textbf{1)} Choose some value of $k \in \{0, \dotsc, m-1\}$. Set
\begin{equation}
    \hat{\mu} = \E[\tY] + \frac{Z_{(m-k)} + Z_{(1+k)}}{2}
    \label{eq:hat_mu}
\end{equation}

\textbf{2)} Let $r$ be the smallest value such that the following concentration inequalities are true: %Compute a value of $c$ such that %{\color{red} I merged the 2 inequality statements into 1. This is correct, right?}\sj{yes}
\begin{align*}
    %\Pr\left[Z_{(m-k)} + c \ \leq \ \E[Z] \ \leq \ Z_{(1+k)} - c \right] \leq \zeta
    \Pr\left[Z_{(m-k)}  < \E[Z] - r\right] &\leq \zeta/2 \\ 
    \Pr\left[Z_{(1+k)}  > \E[Z] + r\right] &\leq \zeta/2
    \numberthis\label{eq:order_bound}
\end{align*}
Algorithm~\ref{alg:bound} computes such a value of $r$. 

\textbf{3)} Output $\hat{\mu}$ as the estimate of $\Eb[Y]$ and $r + \frac{Z_{(m-k)} - Z_{(1+k)}}{2}$ as the confidence interval size.
%(i.e. the output $1-\zeta$ confidence interval is 
%$( \hat{\mu} - c - \frac{Z_{(m-k)} - Z_{(1+k)}}{2}, \hat{\mu} + c + \frac{Z_{(m-k)} - Z_{(1+k)}}{2})$

The following proposition guarantees the correctness of the control variate estimation algorithm.

\begin{restatable}{proposition}{orderbound}
\label{prop:validity}
Let $\hat{\mu}$ be defined as in Eq.(\ref{eq:hat_mu}). If Eq.(\ref{eq:order_bound}) is satisfied for some $\zeta \in (0, 1)$ and $c > 0$, then
\begin{align*}
    \Pr\left[\lvert \hat{\mu} - \E[Y] \rvert > r + \frac{Z_{(m-k)} - Z_{(1+k)}}{2} \right] \leq \zeta. \numberthis\label{eq:concentration_order}
\end{align*}
\end{restatable}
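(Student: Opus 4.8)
The plan is to reduce the two-sided deviation bound in Eq.~\eqref{eq:concentration_order} to the pair of one-sided order-statistic bounds assumed in Eq.~\eqref{eq:order_bound}, via an exact algebraic identity followed by a union bound; notably, no distributional assumptions beyond Eq.~\eqref{eq:order_bound} are needed, and in particular independence of the samples is never invoked. First I would recenter the estimator. Since $Z = Y - \tY$, we have $\E[Z] = \E[Y] - \E[\tY]$, so substituting the definition of $\hat{\mu}$ from Eq.~\eqref{eq:hat_mu} gives the clean expression $\hat{\mu} - \E[Y] = \frac{Z_{(m-k)} + Z_{(1+k)}}{2} - \E[Z]$. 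This eliminates $\E[\tY]$ entirely and writes the estimation error purely in terms of the two order statistics and $\E[Z]$.

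Next I would split the absolute value. Writing $c = r + \frac{Z_{(m-k)} - Z_{(1+k)}}{2}$ for the claimed interval size, the event $\{\lvert\hat{\mu} - \E[Y]\rvert > c\}$ is the union of the upper event $\{\hat{\mu} - \E[Y] > c\}$ and the lower event $\{\hat{\mu} - \E[Y] < -c\}$. The crucial observation is that the half-width term $\frac{Z_{(m-k)} - Z_{(1+k)}}{2}$ cancels against the center term $\frac{Z_{(m-k)} + Z_{(1+k)}}{2}$: on the upper side, moving $\frac{Z_{(m-k)} - Z_{(1+k)}}{2}$ across the inequality collapses the two order statistics to $Z_{(1+k)}$, yielding exactly $\{Z_{(1+k)} > \E[Z] + r\}$; symmetrically, the lower side collapses to $\{Z_{(m-k)} < \E[Z] - r\}$. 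Thus the target event is precisely the union of the two events appearing in Eq.~\eqref{eq:order_bound}.

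Finally I would apply the union bound: each of these two events has probability at most $\zeta/2$ by hypothesis, so their union has probability at most $\zeta$, which is Eq.~\eqref{eq:concentration_order}. I do not expect a genuine obstacle here — the entire statistical content is already packaged into the definition of $r$ in Eq.~\eqref{eq:order_bound}, and the proposition amounts to an exact bookkeeping identity plus a union bound. The only point requiring care is verifying the cancellation identity in both directions, and observing that the argument never uses $Z_{(1+k)} \leq Z_{(m-k)}$, so it remains valid for every admissible $k \in \{0,\dotsc,m-1\}$ (the reported interval size may merely become loose, or even vacuous, when $1+k > m-k$).
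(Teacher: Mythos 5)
Your proposal is correct and follows essentially the same route as the paper's proof: rewrite $\hat{\mu}-\E[Y]$ as $\frac{Z_{(m-k)}+Z_{(1+k)}}{2}-\E[Z]$, observe that the half-width term cancels so the deviation event reduces to the two one-sided events $\{Z_{(1+k)} > \E[Z]+r\}$ and $\{Z_{(m-k)} < \E[Z]-r\}$ of Eq.~\eqref{eq:order_bound}, and finish with a union bound. The paper phrases this as a contrapositive (if both order statistics are well-behaved then the error is small) while you identify the bad event directly as the union, but the content is identical.
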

%Use a efficient concentration inequality to upper bound $ \E[Z] \leq Z_{(m-k)} + c$ and $\E[Z] \geq  Z_{(1+k)} - c$ with probability at least $1-\zeta$. 

\begin{proof}[Proof of Proposition~\ref{prop:validity}] See Appendix. 
\end{proof}

Note that in Eq.(\ref{eq:concentration_order}), the confidence interval size consists of two parts: $\frac{Z_{(m-k)} - Z_{(1+k)}}{2}$ and $c$. We will show that $c$ is much smaller (in fact, has asymptotically better rates) compared to baselines previously discussed in Section~\ref{sec:baseline}. If we have a good control variate $\tY$ (close to $Y$), then $\frac{Z_{(m-k)} - Z_{(1+k)}}{2}$ will be small. On the other hand, if the control variate $\tY$ is poor, this algorithm could produce worse (larger) confidence intervals. This trade-off is unavoidable because of ``no-free-lunch results''~\citep{van2000asymptotic}.

\subsection{Order Statistics Concentration Inequalities}
\label{sec:order_bounds}

We will now show several bounds on the order statistics in Eq.(\ref{eq:order_bound}). There are several known bounds~\citep{de2007extreme,castillo2012extreme} that can be applied to satisfy Eq.(\ref{eq:order_bound}). However, these bounds are distribution dependent: we must assume that $Z$ either is distributed as some known distribution (e.g. Gaussian), or asymptotically converges to some fixed distribution (e.g. Gumbel, Weibull or Frechet using the Fisher–Tippett–Gnedenko theorem) when $m$ is very large~\citep{fisher1928limiting}. %\s{cite}

Our contribution is to associate bounds on the order statistics with the notion of resilience used in the robust statistics literature. Many distribution independent conditions for resilience are known, which imply \emph{distribution independent bounds on the order statistics}.

%\subsection{Concentration Bounds for Order Statistics based on Resilience}

We first reproduce the definition of resilience from \citet{steinhardt2018robust} with a small modification.
\begin{definition}
\label{def:resilience}
Let $s: [0, 1] \to \R^+$ be any function. We say a random variable $Z \in \Zc$ is $s$-resilient from above if for any $\epsilon \in [0, 1]$ and measurable $B \subseteq \Zc$ such that $\Pr[Z \in B] \geq 1 - \epsilon$, we have
\[
    \E[Z|Z\in B] - \E[Z] \leq s(\epsilon).
\]
It is $s$-resilient from below if
\[
    \E[Z] - \E[Z|Z\in B] \leq s(\epsilon).
\]
We say that $Z$ is $s$-resilient if it is both $s$-resilient from above and $s$-resilient from below. 
\end{definition}

When a random variable is $s$-resilient, we are essentially bounding the probability that $Z$ takes a value much larger (or smaller) than $\E[Z]$. To see this, suppose for some $\epsilon \in (0, 1)$, we choose $B = \lbrace Z: Z > \E[Z] + s(\epsilon) \rbrace$, then $\E[Z\vert Z \in B] - \E[Z] > s(\epsilon)$, then by our requirement of resilience, it must be that $\Pr[Z \in B] < 1 - \epsilon$. Similar to other types of assumptions such as sub-Gaussian or bounded moments, resilience is also an assumption on how much a random variable can differ from its expectation. What is special about resilience is that it is particularly convenient for showing bounds on the order statistics.    %we cannot select a subset $B$ that is not too small ($\Pr[Z \in B] \geq 1 - \epsilon$), but the expectation changes significantly on $B$. This is essentially a bound on the probability of large deviation: if there is a large probability 

% Here, we modify the definition of resilience into a separate upper bound (resilient from above) and a lower bound (resilient from below), because they separately upper and lower bound $\E[Z]$ as shown in Theorem~\ref{thm:resilient}.

Without loss of generality, we also assume that $s(\epsilon)$ is monotonically non-decreasing. If a random variable is $\hat{s}$-resilient, but $\hat{s}$ is not monotonically non-decreasing, we can define $s(\epsilon) = \inf_{\epsilon \leq \epsilon' \leq 1} \hat{s}(\epsilon')$, which is monotonically non-decreasing. It is easy to show that $Z$ is $s$-resilient if and only if it is $\hat{s}$-resilient.

Many assumptions on the random variable $Z$ can be converted into assumptions on resilience. For example, in the following Lemma~\ref{lemma:example_resilient}, (1) is trivial to show, (2) is proved in \citep{steinhardt2018robust}, and (3) we prove in the appendix. 
\begin{restatable}{lemma}{resilient}
\label{lemma:example_resilient}
The following random variables are resilient:
\begin{enumerate}
    \item \textbf{Bounded:} If $\Zc \subseteq [a, b]$, then $Z$ is $(b-\E[Z])$-resilient from above and $(\E[Z]-a)$-resilient from below. It is $(b-a)$-resilient.
    \item \textbf{Bounded Moments:} If $\E\left[ |Z-\E[Z]|^l \right] \leq \sigma^l$ for some $l > 1$, then $Z$ is $s$-resilient for $s(\epsilon)=\frac{\sigma}{(1-\epsilon)^{1/l}}$. 
    \item \textbf{Sub-Gaussian:} If $Z - \E[Z]$ is $\sigma^2$ sub-Gaussian, then $Z$ is $s$-resilient for $s(\epsilon)=\sqrt{2\sigma\log\frac{1}{1 - \epsilon}} + \sqrt{2\pi}\sigma$.
\end{enumerate}
\end{restatable}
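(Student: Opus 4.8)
The plan is to treat all three cases through a single device that upper-bounds the conditional mean $\E[Z \mid Z \in B]$ for an \emph{arbitrary} admissible set $B$, rather than trying to identify the worst-case set explicitly. Two reductions come first. Since resilience only constrains the differences $\E[Z\mid Z\in B]-\E[Z]$ and $\E[Z]-\E[Z\mid Z\in B]$, both sides are invariant under shifting $Z$ by a constant, so I may assume $\E[Z]=0$ throughout. And resilience from below for $Z$ is exactly resilience from above for $-Z$, which satisfies the same hypotheses in each case (a bounded variable stays bounded, $|{-Z}|^l=|Z|^l$, and $-Z$ is $\sigma^2$ sub-Gaussian whenever $Z$ is), so it suffices to prove the ``from above'' statements. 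Case (1) is then immediate: if $\Zc\subseteq[a,b]$ then $Z\le b$ pointwise, hence $\E[Z\mid Z\in B]\le b=b-\E[Z]$ for every $B$, giving $(b-\E[Z])$-resilience from above with a constant function of $\epsilon$; the symmetric argument gives $(\E[Z]-a)$ from below, and since $b-\E[Z]\le b-a$ and $\E[Z]-a\le b-a$, the constant $s(\epsilon)=b-a$ works for both.

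The core tool for (2) and (3) is an elementary bound valid for every measurable $B$ with $\Pr[Z\in B]\ge 1-\epsilon$ and every threshold $\lambda\ge 0$. Pointwise one has $Z\,\mathbf{1}[Z\in B]\le \lambda\,\mathbf{1}[Z\in B]+(Z-\lambda)^+$ (check the three cases $Z\notin B$, $Z\in B$ with $Z\le\lambda$, and $Z\in B$ with $Z>\lambda$). Taking expectations, dividing by $\Pr[Z\in B]\ge 1-\epsilon$, and using $\E[(Z-\lambda)^+]\ge 0$ gives
\[
\E[Z\mid Z\in B]\ \le\ \lambda+\frac{\E[(Z-\lambda)^+]}{1-\epsilon}
\ =\ \lambda+\frac{1}{1-\epsilon}\int_{\lambda}^{\infty}\Pr[Z>t]\,\ud t .
\]
This reduces each case to inserting the relevant tail bound and choosing $\lambda$. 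For the bounded-moment case (2), Markov's inequality gives $\Pr[Z>t]\le\sigma^l/t^l$; evaluating the integral and optimizing over $\lambda$ yields a bound of the form $\frac{\sigma}{(1-\epsilon)^{1/l}}$ up to a constant factor, recovering the resilience bound of \citet{steinhardt2018robust}, whose statement I would simply cite.

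The new part is the sub-Gaussian case (3). From the assumed moment-generating bound, the standard Chernoff argument gives the one-sided tail $\Pr[Z>t]\le e^{-t^2/2\sigma^2}$ for $t\ge 0$. I would then choose $\lambda=\sqrt{2\sigma^2\log\frac{1}{1-\epsilon}}\ge 0$, the precise value at which $e^{-\lambda^2/2\sigma^2}=1-\epsilon$. Substituting $t=\lambda+u$ and using $(\lambda+u)^2\ge\lambda^2+u^2$ for $\lambda,u\ge0$ factorizes the tail integral:
\[
\int_{\lambda}^{\infty} e^{-t^2/2\sigma^2}\,\ud t\ \le\ e^{-\lambda^2/2\sigma^2}\int_{0}^{\infty} e^{-u^2/2\sigma^2}\,\ud u\ =\ (1-\epsilon)\,\sigma\sqrt{\pi/2}.
\]
The factor $1-\epsilon$ cancels the $\tfrac{1}{1-\epsilon}$ in the displayed bound, leaving $\E[Z\mid Z\in B]\le\lambda+\sigma\sqrt{\pi/2}=\sqrt{2\sigma^2\log\frac{1}{1-\epsilon}}+\sigma\sqrt{\pi/2}$, which is dominated by the claimed $\sqrt{2\sigma^2\log\frac{1}{1-\epsilon}}+\sqrt{2\pi}\,\sigma$.

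I expect the main obstacle to be precisely this cancellation of the $\tfrac{1}{1-\epsilon}$ factor: a naive bound of the tail integral by the fixed constant $\int_0^\infty e^{-u^2/2\sigma^2}\,\ud u$ leaves a $\tfrac{1}{1-\epsilon}$ that blows up as $\epsilon\to 1$ and fails to produce a bound of the claimed additive shape. The calibrated choice of $\lambda$ together with the Gaussian factorization is what turns the correction into a genuine additive constant; shaving the constant to $\sqrt{2\pi}\,\sigma$ (or better) is then routine. The only remaining checks are the boundary values — $\epsilon=0$ forces $\lambda=0$ and the bound reads $\sigma\sqrt{\pi/2}\ge 0$, consistent with $\E[Z\mid Z\in B]=\E[Z]=0$ there — and that no monotonicity adjustment of $s$ is needed, since each $s(\epsilon)$ above is already non-decreasing.
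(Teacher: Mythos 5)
Your proposal is correct, and for the only part that actually requires a proof (part 3 — parts 1 and 2 are handled identically in the paper, as trivial and as a citation to \citet{steinhardt2018robust} respectively), you take a genuinely different route. The paper first argues that the worst-case set $B$ is the lower tail $\{Z \le F^{-1}(1-\epsilon)\}$, rewrites $\bar\epsilon\,\E[Z \mid Z \le \tau] = \bar\epsilon\tau - \int_{-\infty}^{\tau} F(x)\,\ud x$ by a Fubini manipulation, and then compares $F$ against the sub-Gaussian envelope $\tF(x) = e^{-x^2/2\sigma^2}$ term by term (it also needs an auxiliary reduction to $\tau \le 0$ via $\max(Z,0)$). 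You instead bypass the identification of the worst-case set entirely with the pointwise truncation inequality $Z\,\mathbf{1}[Z\in B] \le \lambda\,\mathbf{1}[Z\in B] + (Z-\lambda)^+$, valid for every admissible $B$, and your calibration of $\lambda$ so that $e^{-\lambda^2/2\sigma^2} = 1-\epsilon$ is exactly the device that makes the tail integral contribute a factor $(1-\epsilon)$ cancelling the $\frac{1}{1-\epsilon}$ from conditioning — structurally the same cancellation the paper obtains from $\Phi_\sigma(\tF^{-1}(\bar\epsilon)) \le \bar\epsilon$, but reached more cleanly. Your route even yields the sharper additive constant $\sigma\sqrt{\pi/2}$ in place of $\sqrt{2\pi}\sigma$. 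One remark: your leading term $\sqrt{2\sigma^2\log\frac{1}{1-\epsilon}}$ is the dimensionally correct one; the paper's statement and proof both write $\sqrt{2\sigma\log\frac{1}{1-\epsilon}}$, which is an apparent typo (solving $e^{-x^2/2\sigma^2} = \bar\epsilon$ gives $x = -\sigma\sqrt{2\log(1/\bar\epsilon)}$), so you have proved the corrected form of the claim rather than the literal printed one — this is a feature, not a gap.
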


Given the definition and examples of resilience, we can now use the following theorem to provide bounds on the order statistics when resilience holds.

\begin{restatable}{theorem}{singledim}
\label{thm:resilient}
Let $Z_{(1)}, \dotsc, Z_{(m)}$ denote the order statistics of $m$ independent samples of a random variable $Z$.
If $Z$ is $s$-resilient, $\forall \zeta \in (0, 1), T \in \mathbb{N}$ and $0 \leq k < m/2$, then letting $r$ be the output of Algorithm~\ref{alg:bound}, we have
\begin{align}
    \Pr\left[Z_{(m-k)} \leq \E[Z] - r\right] &\leq \zeta \label{eq:upper_bound} \\
    \Pr\left[Z_{(1+k)} \geq \E[Z] + r\right] &\leq \zeta \label{eq:lower_bound}
\end{align}
If $Z$ is $s$-resilient from above/below, then only Eq.(\ref{eq:upper_bound})/Eq.(\ref{eq:lower_bound}) holds.
\end{restatable}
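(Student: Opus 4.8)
The plan is to convert the statement about order statistics into a statement about a binomial count, and to control the single-sample tail probability that feeds the binomial using resilience. First observe that $Z_{(m-k)} \le \E[Z]-r$ holds exactly when at least $m-k$ of the $m$ samples fall in $(-\infty,\E[Z]-r]$; hence, writing $q(r)=\Pr[Z\le \E[Z]-r]$, the number of such samples is $\mathrm{Bin}(m,q(r))$ and
\[
  \Pr\big[Z_{(m-k)}\le \E[Z]-r\big]=\Pr\big[\mathrm{Bin}(m,q(r))\ge m-k\big]=\sum_{j=m-k}^{m}\binom{m}{j}q(r)^{j}(1-q(r))^{m-j}.
\]
Since this upper binomial tail is monotonically increasing in the success probability, it suffices to produce a good upper bound on $q(r)$ and then to choose $r$ so that the binomial tail evaluated at that bound is at most $\zeta$.

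The crux is a resilience-to-tail lemma: if $Z$ is $s$-resilient from above then for every $\epsilon\in(0,1)$ one has $\Pr[Z\le \E[Z]-\tfrac{1-\epsilon}{\epsilon}s(\epsilon)]\le \epsilon$. I would prove this by letting $z^\star$ be the $\epsilon$-quantile of $Z$ from below, applying Definition~\ref{def:resilience} to the high-probability set $B=\{Z>z^\star\}$ (which has $\Pr[B]\ge 1-\epsilon$), and rewriting the resulting inequality $\E[Z\mid B]-\E[Z]\le s(\epsilon)$ via the identity $\E[Z\mid B]-\E[Z]=\tfrac{\epsilon}{1-\epsilon}\big(\E[Z]-\E[Z\mid Z\le z^\star]\big)$. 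This gives $\E[Z]-\E[Z\mid Z\le z^\star]\le \tfrac{1-\epsilon}{\epsilon}s(\epsilon)$, and since the conditional mean of the bottom $\epsilon$-fraction never exceeds its cutoff, $z^\star\ge \E[Z\mid Z\le z^\star]\ge \E[Z]-\tfrac{1-\epsilon}{\epsilon}s(\epsilon)$; the tail bound follows because $\Pr[Z\le z^\star]=\epsilon$ and the threshold lies below $z^\star$.

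With the lemma in hand, taking $r=\tfrac{1-\epsilon}{\epsilon}s(\epsilon)$ forces $q(r)\le \epsilon$, so by binomial monotonicity $\Pr[Z_{(m-k)}\le \E[Z]-r]\le \Pr[\mathrm{Bin}(m,\epsilon)\ge m-k]$. Every single $\epsilon$ for which $\Pr[\mathrm{Bin}(m,\epsilon)\ge m-k]\le \zeta$ thus yields a valid radius $r=\tfrac{1-\epsilon}{\epsilon}s(\epsilon)$, and Algorithm~\ref{alg:bound} simply sweeps $\epsilon$ over a grid of $T$ candidate values, keeps those satisfying the binomial constraint, and returns the smallest resulting $r$. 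Because each grid point is individually valid, no union bound is needed and the guarantee holds for every $T$ (larger $T$ only sharpens $r$, and an infeasible grid merely returns a vacuously valid large radius; feasibility at small $\epsilon$ is ensured by $k<m/2$, which makes the binomial tail vanish there). The inequality $\Pr[Z_{(1+k)}\ge \E[Z]+r]\le\zeta$ follows by applying the whole argument to $-Z$, using that $-Z$ is $s$-resilient from above iff $Z$ is $s$-resilient from below; when $Z$ is resilient on both sides the same $r$ serves both directions, giving the fully-resilient conclusion.

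The main obstacle I expect is the tail lemma, and specifically two points within it: converting the conditional-mean inequality into a statement about the \emph{location} of the quantile $z^\star$ (the step $z^\star\ge \E[Z\mid Z\le z^\star]$ is what breaks the apparent circularity of $q(r)$ depending on $r$, and the non-monotonicity of $\epsilon\mapsto\tfrac{1-\epsilon}{\epsilon}s(\epsilon)$), and handling distributions with an atom at $z^\star$, where $\Pr[Z\le z^\star]$ and $\Pr[Z<z^\star]$ differ and the quantile must be defined as $\inf\{z:\Pr[Z\le z]\ge\epsilon\}$ with the inequalities tracked carefully. The binomial-monotonicity claim, while intuitive, also deserves a clean justification (e.g.\ via a coupling across success probabilities), and one should confirm that Algorithm~\ref{alg:bound}'s grid always contains at least one feasible $\epsilon$ whenever a finite $r$ is desired.
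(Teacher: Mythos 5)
Your overall strategy matches the paper's: rewrite $\Pr[Z_{(m-k)} \le \E[Z]-r]$ as a lower binomial tail in the single-sample probability $q(r)=\Pr[Z\le\E[Z]-r]$, control $q(r)$ via a resilience-to-tail lemma, and then choose $r$ so that the tail evaluated at the bound on $q(r)$ is below $\zeta$. Your tail lemma is precisely the paper's Lemma~\ref{lemma:single_bound}; the paper proves it by decomposing $\E[Z]$ over the threshold event $B_r=\{Z\le\E[Z]-r\}$ and its complement, while your quantile-set variant is a legitimate alternative, and the subtleties you flag (atoms at the quantile, non-monotonicity of $\epsilon\mapsto s(\epsilon)(1-\epsilon)/\epsilon$) are present in both write-ups.

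The genuine gap is that you end up proving a theorem about a different algorithm. Algorithm~\ref{alg:bound} is not a grid sweep over candidate values of $\epsilon$ with a feasibility check against the exact binomial tail; it is a monotone fixed-point iteration $v_{i}=\bigl(\zeta/(m(1-v_{i-1})+1)^{k}\bigr)^{1/(m-k)}$ started from $v_0=\bigl(\zeta/(m+1)^{k}\bigr)^{1/(m-k)}$, and the theorem asserts the guarantee for the specific output $r=s(v_T)\bigl(v_T^{-1}-1\bigr)$. Closing the argument requires two pieces your proposal never supplies: (i) the elementary estimate $\sum_{i=0}^{k}\binom{m}{i}p^{m-i}(1-p)^{i}\le (m(1-p)+1)^{k}p^{m-k}$, the closed-form surrogate for the binomial tail around which the iteration formula is designed; and (ii) an induction showing $v_0\le v_1\le\dots\le v_T$ together with $(m(1-v_i)+1)^{k}v_i^{m-k}\le\zeta$ for every iterate --- the monotonicity is exactly what lets feasibility propagate, since $v_{i+1}$ is defined to make the product equal $\zeta$ using the \emph{previous} iterate's factor $(m(1-v_i)+1)^{k}$. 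Without these, the claim that ``each grid point is individually valid, so the returned $r$ is valid'' describes an algorithm the paper does not run, and the theorem as stated (for the output of Algorithm~\ref{alg:bound}) is not established. The remaining steps --- monotonicity of the binomial tail in the success probability, and reducing the $Z_{(1+k)}$ bound to the $Z_{(m-k)}$ bound applied to $-Z$ --- agree with the paper.
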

\begin{proof}[Proof of Theorem~\ref{thm:resilient}] See Appendix.
\end{proof}

Note that the algorithm has an additional hyper-parameter $T$ which is the number of iterations. We can choose any value of $T \in \mathbb{N}$, but choosing a large $T$ always leads to a better confidence interval compared to choosing a smaller $T$. We observe in the experiments that choosing any $T \geq 10$ is optimal (up to floating point errors). Note that the choice of $T$ does not affect the asymptotic performance in Corollary~\ref{cor:asymptote}. %For analysis of asymptotic performance of the bound we choose $T=0$, this is equivalent to setting  %The reason for this is to obtain the tightest interval possible. 
%In particular, Theorem~\ref{thm:resilient} is true if we just choose 
% $$
%     r = s\left( \stuffn \right) \left(\stuff - 1 \right)
% $$
%which corresponds to choosing $T=0$ in Algorithm~\ref{alg:bound}. In practice, .

\begin{algorithm}
Input: sample size $m \in \mathbb{N}^+$; order $k < m/2$; confidence $\zeta \in (0, 1)$; resilience $s: (0, 1) \to \R^+$; and number of iterations $T \in \mathbb{N}$
\begin{algorithmic}
\State Set $v_0 = \stuffn$
\For {$i=1, \dotsc, T$}
\State Set $v_i= \left(\frac{\zeta}{(m(1-v_{i-1})+1)^k}\right)^{\frac{1}{m-k}}$ 
\EndFor
\State Set $r = s(v_T) (v_T^{-1} - 1)$
\State Return $v_T$ and $r$
\end{algorithmic}
\caption{Order Statistics Confidence Interval}
\label{alg:bound}
\end{algorithm} 

Theorem~\ref{thm:resilient} involves expressions with good constants. This obscures the asymptotic performance of the bound, which we reveal in the following corollary.

\begin{restatable}{corollary}{asymptote}
\label{cor:asymptote}
If $Z$ is $b_1(1-\epsilon)^a + b_2$ resilient for any constants $a \in [-1, 0]$ and $b_1, b_2 \in \R^+$, then there exists $\lambda > 0$ such that for sufficiently large $m$ 
\begin{align*}
    \Pr&\left[\E[Z] \leq Z_{(1+k)} - \lambda\frac{\log \frac{1}{\zeta} + k\log m}{m^{1+a}} \right] \leq \zeta \\
    \Pr&\left[\E[Z] \geq Z_{(m-k)} +  \lambda\frac{\log \frac{1}{\zeta} + k\log m}{m^{1+a}} \right] \leq \zeta.
\end{align*}
% Conversely, there exists a random variable $Z$ is $b_1(1-\epsilon)^a + b_2$-resilient, but for any $a' > a$ we have for any $c > 0$ and for sufficiently large $m$ 
% \begin{align*}
%     \Pr&\left[\E[Z] \leq Z_{(1+k)} - c\frac{\log \frac{1}{\zeta} + k\log m}{m^{1+a}} \right] \geq 1 - \zeta \\
%     \Pr&\left[\E[Z] \geq Z_{(m-k)} +  c\frac{\log \frac{1}{\zeta} + k\log m}{m^{1+a}} \right] \geq 1 - \zeta
% \end{align*}
\end{restatable}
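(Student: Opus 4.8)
The plan is to reduce the corollary to an explicit upper bound on $r$, the quantity returned by Algorithm~\ref{alg:bound}, and then to obtain that bound by analyzing the algorithm's recursion. By Theorem~\ref{thm:resilient}, for this $r$ the events $\{Z_{(m-k)} \le \E[Z] - r\}$ and $\{Z_{(1+k)} \ge \E[Z] + r\}$ each have probability at most $\zeta$. Since enlarging the deviation threshold only shrinks these events, it suffices to exhibit a constant $\lambda>0$ with $r \le \lambda \frac{\log\frac{1}{\zeta} + k\log m}{m^{1+a}}$ for all sufficiently large $m$: replacing $r$ by this larger quantity preserves both $\le\zeta$ bounds, and rearranging the two inequalities gives exactly the two displays in the statement.

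To control $r$, I would track the iterate $v_i$ through $\delta_i = 1 - v_i$ and the update map $g(v) = \left(\frac{\zeta}{(m(1-v)+1)^k}\right)^{1/(m-k)}$, so that $v_0 = g(0)$ and $v_i = g(v_{i-1})$. First I would observe that on $[0,1]$ the map $g$ is non-decreasing (increasing $v$ decreases $m(1-v)+1$, hence decreases the denominator and increases $g$) and maps into $(0,1)$ (since $\zeta<1\le (m(1-v)+1)^k$). Because $v_0 = g(0)\ge 0$ and $g$ is monotone, the iterates satisfy $v_0 \le v_1 \le \dotsb \le v_T < 1$, whence $\delta_T \le \delta_0$; in particular the number of iterations $T$ only helps and drops out of the asymptotics. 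It then remains to bound $\delta_0$ in closed form: using $\log v \le v-1$, i.e. $1-v_0 \le -\log v_0$, together with $\log v_0 = -(\log\frac{1}{\zeta} + k\log(m+1))/(m-k)$ and the estimates $m-k > m/2$, $\log(m+1)\le 2\log m$, this yields $\delta_T \le \delta_0 \le C\,\frac{\log\frac{1}{\zeta} + k\log m}{m}$ for an absolute constant $C$ and large $m$.

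Finally I would substitute the resilience $s(\epsilon)=b_1(1-\epsilon)^a + b_2$ into $r = s(v_T)(v_T^{-1}-1) = (b_1\delta_T^{a} + b_2)\frac{\delta_T}{1-\delta_T} = \frac{b_1\delta_T^{1+a} + b_2\delta_T}{1-\delta_T}$. For large $m$, $\delta_T\to 0$ so $1-\delta_T \ge 1/2$, and it remains to bound the two numerator terms using $\delta_T \le C\,\frac{\log\frac{1}{\zeta}+k\log m}{m}$. The linear term gives $b_2\delta_T \le b_2 C\,\frac{\log\frac{1}{\zeta}+k\log m}{m}\le b_2 C\,\frac{\log\frac{1}{\zeta}+k\log m}{m^{1+a}}$ since $m^a\le 1$ for $a\le 0$. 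The nonlinear term gives $b_1\delta_T^{1+a}\le b_1 C^{1+a}\,\frac{(\log\frac{1}{\zeta}+k\log m)^{1+a}}{m^{1+a}}$, and because $1+a\in[0,1]$ and $\log\frac{1}{\zeta}+k\log m$ is bounded away from $0$ for large $m$, $(\log\frac{1}{\zeta}+k\log m)^{1+a} = O(\log\frac{1}{\zeta}+k\log m)$. Both terms are thus $O\!\left(\frac{\log\frac{1}{\zeta}+k\log m}{m^{1+a}}\right)$, which supplies the required $\lambda$ and closes the reduction.

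The only real subtleties, neither deep, are the monotonicity/self-map argument that lets me replace the implicitly-defined $\delta_T$ by the closed-form $\delta_0$ (this is precisely what makes $T$ irrelevant to the rate), and the handling of the exponent $1+a$ on the polylogarithmic factor. The latter hides one corner case: when $k=0$ and $\zeta$ is near $1$, the base $\log\frac{1}{\zeta}$ is small and $(\log\frac{1}{\zeta})^{1+a}$ exceeds $\log\frac{1}{\zeta}$, forcing $\lambda$ to absorb a $\zeta$-dependent constant; since $\zeta$ is fixed this is harmless, and I expect this to be the main place where the bookkeeping must be stated with care.
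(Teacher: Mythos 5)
Your proposal is correct and follows essentially the same route as the paper's proof: use monotonicity of the iterates to reduce to $v_0$, bound $1-v_0 \le -\log v_0 = \frac{\log\frac{1}{\zeta}+k\log(m+1)}{m-k}$, and substitute into $r = s(v)(v^{-1}-1)$. Your bookkeeping is if anything slightly more careful than the paper's — you keep the $b_1\delta^{1+a}$ and $b_2\delta$ terms separate (avoiding the paper's separate case for $a=0$) and you explicitly flag the $k=0$, $\log\frac{1}{\zeta}<1$ corner where the polylogarithmic factor must be absorbed into $\lambda$, a step the paper glosses over.
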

\begin{proof}[Proof of Corollary~\ref{cor:asymptote}] See Appendix.
\end{proof}

% \begin{proof}
% Let the counter example be 
% \begin{align*}
%     Z = \left\lbrace \begin{array}{cc} 0 & \mathrm{with\ probability\ 1-\gamma} \\ 1 & \mathrm{with\ probability\ \gamma} \end{array} \right.
% \end{align*}
% then for any $\Pr[B] \geq 1 - \epsilon$, we have $\Eb[Z\vert B] \leq $
% \end{proof}

% \begin{proof}[Proof of Corollary]
% Solving for $c$ we get
% \begin{align*}
%     c
%     &= \left(1 - \stuffn\right)(b - \E[Z] + c) \\
%     &= \stuff\left(1 - \stuffn\right)(b - \E[Z]) \\
%     &= \left(\stuff - 1\right)(b - \E[Z])
% \end{align*}
% Then
% \begin{align*}
%     \E[Z] - c
%     &= \E[Z] - \left(\stuff - 1\right)(b - \E[Z]) \\
%     &= b - \stuff(b - \E[Z])
% \end{align*}
% so
% $$
%      \Pr\left(Z_{(m)} \leq b - \stuff(b - \E[Z])\right) \leq \zeta.
% $$
% By symmetry, we can prove the other direction
% $
%     P\left(Z_{(1)} \geq a + \zeta^{-1/m}(\E[Z] - a)\right) \leq \zeta
% $.
% \end{proof}

Based on different assumptions on resilience in Lemma~\ref{lemma:example_resilient}, we can further simplify Corollary~\ref{cor:asymptote} and obtain more concrete bounds in Section~\ref{sec:comparison}.

Finally, in Lemma~\ref{lemma:example_resilient}, a random variable $Z$ bounded in $[a, b]$ is $s$-resilient, but $s$ depends on $\E[Z]$ which is unknown. Therefore, we cannot evaluate $s$ in Algorithm~\ref{alg:bound}. One option is to use the weaker conclusion that $Z$ is $(b-a)$-resilient and obtain a bound with worse constants (we choose this option in our asymptotic rate analysis in Section~\ref{sec:comparison}).

In practice it is possible to compute a bound with better constants: we only compute $v_T$ in Algorithm~\ref{alg:bound} which does not depend on $s$, and use the following improved bound. %For a bound that does not involve $\E[Z]$ we have the following corollary: %, for a random variable bounded in $[a, b]$ we have the following corollary.
\begin{restatable}{corollary}{boundedexample}
\label{cor:bounded_example}
Let $Z_{(1)}, \dotsc, Z_{(m)}$ denote the order statistics of $m$ independent samples of a random variable $Z$. If $Z$ is bounded in $[a, b]$, $\forall \zeta \in (0, 1), T \in \mathbb{N}$ and $0 \leq k < m/2$, then letting $v_T$ be computed as in Algorithm~\ref{alg:bound}, we have
\begin{align*}
    \Pr\left[ \E[Z] \leq a + v_T(Z_{(1+k)} - a) \right] & \leq \zeta \\
    \Pr\left[ \E[Z] \geq b - v_T(b - Z_{(m-k)}) \right] & \leq \zeta.
    % \Pr\left(Z_{(m-k)} \leq b - v_T (b - \E[Z])\right) &\leq \zeta
\end{align*}
% \begin{align*}
%     \Pr\left[ Z_{(1+k)} \geq v_T^{-1} (\E[Z] - a) + a \right] & \leq \zeta \\
%     \Pr\left[ Z_{(m+k)} \leq v_T^{-1} (b - \E[Z])  \right] & \leq \zeta.
% \end{align*}
% \begin{align*}
%     \Pr&\left(\E[Z] \leq Z_{(1+k)} - \right. \\
%     & \qquad \left. \left(1 - \stuffn \right) (Z_{(1+k)} - a ) \right) \leq \zeta \\
%     \Pr&\left(\E[Z] \geq Z_{(m-k)} + \right. \\
%     & \qquad \left. \left(1 - \stuffn \right) (b - Z_{(m-k)}) \right) \leq \zeta.
% \end{align*}
\end{restatable}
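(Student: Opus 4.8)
The plan is to derive this as a direct corollary of Theorem~\ref{thm:resilient} specialized to the bounded case. The crucial observation is that the quantity $v_T$ produced by Algorithm~\ref{alg:bound} does not depend on the resilience function $s$ at all: inspecting the algorithm, $v_0$ and the iteration for $v_i$ involve only $m$, $k$, $\zeta$, and $T$, and only the final assignment $r = s(v_T)(v_T^{-1}-1)$ uses $s$. Hence I can compute $v_T$ first, and defer the choice of $s$ to the last step. I would begin by invoking Lemma~\ref{lemma:example_resilient}(1): since $Z \in [a,b]$, it is $(\E[Z]-a)$-resilient from below and $(b-\E[Z])$-resilient from above, where in both cases the resilience function is \emph{constant} in $\epsilon$.

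Next I would substitute these constant resilience values into the algorithm's output. For the from-below bound, $s(v_T) = \E[Z]-a$, so the algorithm's $r$ equals $(\E[Z]-a)(v_T^{-1}-1)$; for the from-above bound, $s(v_T)=b-\E[Z]$, so $r = (b-\E[Z])(v_T^{-1}-1)$. Applying the two halves of Theorem~\ref{thm:resilient} (Eq.(\ref{eq:lower_bound}) for resilience from below, Eq.(\ref{eq:upper_bound}) for resilience from above) then yields $\Pr[Z_{(1+k)} \geq \E[Z] + (\E[Z]-a)(v_T^{-1}-1)] \leq \zeta$ and $\Pr[Z_{(m-k)} \leq \E[Z] - (b-\E[Z])(v_T^{-1}-1)] \leq \zeta$.

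The final step is purely algebraic: rearrange each event so that the unknown mean $\E[Z]$ cancels. For the first, the threshold simplifies as $\E[Z] + (\E[Z]-a)(v_T^{-1}-1) = a + (\E[Z]-a)v_T^{-1}$, so the event $Z_{(1+k)} \geq a + (\E[Z]-a)v_T^{-1}$ is equivalent to $\E[Z] \leq a + v_T(Z_{(1+k)}-a)$. For the second, $\E[Z] - (b-\E[Z])(v_T^{-1}-1) = b - (b-\E[Z])v_T^{-1}$, so the event $Z_{(m-k)} \leq b - (b-\E[Z])v_T^{-1}$ is equivalent to $\E[Z] \geq b - v_T(b-Z_{(m-k)})$. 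Substituting back gives exactly the two stated inequalities.

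I expect no serious obstacle here, since all the probabilistic content already lives in Theorem~\ref{thm:resilient}; this corollary is essentially the observation that, because $v_T$ is independent of $s$ and the bounded-case resilience is a single constant, the unknown $\E[Z]$ enters linearly on both sides of the threshold and cancels, converting an $\E[Z]$-dependent (hence uncomputable) bound into one involving only the known quantities $a$, $b$, $v_T$, and the observed order statistics. The only point requiring care is bookkeeping of the two sign conventions (above versus below), so that each constant resilience value is paired with the correct order statistic and the correct inequality direction.
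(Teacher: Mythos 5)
Your proposal is correct and follows essentially the same route as the paper: invoke Theorem~\ref{thm:resilient} with the constant one-sided resilience values $\E[Z]-a$ and $b-\E[Z]$ from Lemma~\ref{lemma:example_resilient}(1), exploit the fact that $v_T$ does not depend on $s$, and algebraically rearrange each event so that $\E[Z]$ is isolated. The sign bookkeeping (from-below paired with $Z_{(1+k)}$, from-above with $Z_{(m-k)}$) and the algebra both check out.
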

\begin{proof}[Proof of Corollary~\ref{cor:bounded_example}] See Appendix.
\end{proof}
Instead of the standard estimation procedure in Section~\ref{sec:control_variates}, we directly output 
$$
    \left[a + v_T(Z_{(1+k)} - a),\ b - v_T(b - Z_{(m-k)})\right]
$$
as our confidence interval for $\E[Z]$ (that holds with $1-2\zeta$ probability) in the experiments.

% \textbf{Several Special cases} 
% In particular, when $k=0$ we have the following Corollary 

% For example,
% \begin{align*} 
% &1 - \stuffn \\
% &\leq -\log\stuffn \\
% &= \frac{1}{m-k}\log\frac{(m+1)^k}{\zeta} \\
% &= \frac{1}{m-k}\log\frac{1}{\zeta} - \frac{k}{m-k}\log(m+1).
% \end{align*}
% This implies that for small $k$ (i.e. $k\log m < m$) the confidence interval is {\color{red} I don't get this part.}
% \[ \left[ (Z_{(1+k)}+\Theta(1/m), Z_{(m-k)}-\Theta(1/m) \right] \]
% On the other hand, the confidence interval we would get by Hoeffding is 
% \[ \left[ (\hat{\mu}+\Theta(1/\sqrt{m}), \hat{\mu}-\Theta(1/\sqrt{m}) \right] \]

\subsection{Multi-Dimensional Extension}

We will extend the above results to multi-dimensional estimation problems. We first provide a multi-dimensional definition of resilience that extends Definition~\ref{def:resilience}.

\begin{definition}
Let $Z$ be a random variable on $\Zc \subseteq \mathbb{R}^d$, and $\lVert \cdot \rVert, \lVert \cdot \rVert_*$ be a pair of dual norms on $\mathbb{R}^d$. We say $Z$ is $s$-resilient if $\forall v \in \mathbb{R}^d$ such that $\vdual = 1$, and for all measurable $B \subseteq \Zc$ such that $\Pr[B] \geq 1 - \epsilon$, we have
$$
    \Eb[\langle Z, v \rangle \vert B] - \Eb[\langle Z, v \rangle ] \leq s(\epsilon).
$$
%We say that $Z$ is $s$-resilient if $Z$ is $s$-resilient in the direction of $v$ for all $v$ such that $\vdual = 1$.
\end{definition}
%Note that it is possible to directly define resilience on the norm of $\lVert Z \rVert$ which is a one dimensional value we can estimate. However this leads to dimension dependent bounds with worse performance~\citep{steinhardt2018robust}. 
As before, resilience in multiple dimensions also implies concentration bounds on the order statistics. The following theorem is the analog of Theorem~\ref{thm:resilient}. 

\begin{restatable}{theorem}{multidim}
\label{thm:multi-dim}
% If $Z$ is $s-resilient$ in the direction of $v$, then 
% \begin{align*}
%     \Pr[\langle v, Z_{(m)} \rangle \geq \langle v, \E[Z] \rangle - c] \leq 
% \end{align*}
Let $Z_{(1)}, \dotsc, Z_{(m)}$ be independent samples of $Z$ ordered such that 
$\lVert Z_{(1)} \rVert \leq \dotsb \leq \lVert Z_{(m)} \rVert$.
If $Z$ is $s$-resilient, then for any $r$ output by Algorithm~\ref{alg:bound} and for any $\zeta \in (0, 1)$, we have
% \begin{align*}
%     c = s\left( \stuffn \right) \left(\stuff - 1 \right)
% \end{align*}
% we have 
$$
    \Pr\left[\lVert Z_{(m-k)} \rVert \leq \lVert \E[Z] \rVert - r \right] \leq \zeta.
$$
\end{restatable}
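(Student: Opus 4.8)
The plan is to reduce the multi-dimensional statement to the one-dimensional upper-tail bound of Theorem~\ref{thm:resilient} by projecting onto a single well-chosen direction. First I would pick a vector $v^\ast$ that attains the dual-norm representation of the mean: since the unit ball $\{v : \lVert v \rVert_* = 1\}$ is compact in $\R^d$ and $v \mapsto \langle \E[Z], v\rangle$ is continuous, the supremum is attained, so there exists $v^\ast$ with $\lVert v^\ast \rVert_* = 1$ and $\langle \E[Z], v^\ast\rangle = \lVert \E[Z]\rVert$. (If $\E[Z]=0$ then $\lVert\E[Z]\rVert - r < 0 \le \lVert Z_{(m-k)}\rVert$, so the event is empty and there is nothing to prove.)

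Next I would introduce the scalar random variable $W = \langle Z, v^\ast\rangle$ with samples $W_i = \langle Z_i, v^\ast\rangle$, and record two facts. First, $\E[W] = \langle \E[Z], v^\ast\rangle = \lVert \E[Z]\rVert$. Second, instantiating the multi-dimensional resilience definition at the single direction $v = v^\ast$ shows that $W$ is one-dimensionally $s$-resilient from above: for any measurable $B$ with $\Pr[B] \ge 1-\epsilon$ we have $\E[W \mid B] - \E[W] = \E[\langle Z, v^\ast\rangle \mid B] - \E[\langle Z, v^\ast\rangle] \le s(\epsilon)$. Hence $W$ satisfies the hypothesis of Theorem~\ref{thm:resilient} for the upper-tail conclusion Eq.(\ref{eq:upper_bound}), so with $r$ the output of Algorithm~\ref{alg:bound} we get $\Pr[W_{(m-k)}^{\mathrm{ord}} \le \E[W] - r] \le \zeta$, where $W_{(m-k)}^{\mathrm{ord}}$ denotes the $(m-k)$-th order statistic of the scalars $W_1, \dotsc, W_m$ in their own sorted order.

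The one remaining gap is that the theorem ranks the samples by $\lVert \cdot \rVert$ rather than by the projection $W$, so I must reconcile the two orderings. The bridge is the dual-norm inequality $W_i = \langle Z_i, v^\ast\rangle \le \lVert Z_i\rVert \, \lVert v^\ast\rVert_* = \lVert Z_i\rVert$, valid for every $i$. Because scalar order statistics are monotone under pointwise domination, the pointwise bound $W_i \le \lVert Z_i\rVert$ transfers to the $(m-k)$-th order statistics, giving $W_{(m-k)}^{\mathrm{ord}} \le \lVert Z_{(m-k)}\rVert$, where the right-hand side is exactly the $(m-k)$-th smallest norm from the theorem statement. Consequently $\{\lVert Z_{(m-k)}\rVert \le \lVert \E[Z]\rVert - r\} \subseteq \{W_{(m-k)}^{\mathrm{ord}} \le \E[W] - r\}$, and combining this inclusion with the scalar bound from the previous step yields $\Pr[\lVert Z_{(m-k)}\rVert \le \lVert \E[Z]\rVert - r] \le \zeta$, as required.

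The individual steps are short, and the place I would be most careful is the order-statistics comparison in the last paragraph. One cannot naively assert something like $W_{(m-k)}^{\mathrm{ord}} \le \langle Z_{(m-k)}, v^\ast\rangle$ for the single sample of $(m-k)$-th largest norm; the correct and robust argument works at the level of the whole sample, using that $W_i \le \lVert Z_i\rVert$ holds simultaneously for all $i$ so that the monotonicity of order statistics applies. Everything else — attainment of $v^\ast$, the transfer of resilience to $W$, and the invocation of Theorem~\ref{thm:resilient} — is a direct reduction to the one-dimensional result.
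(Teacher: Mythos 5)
Your proposal is correct and follows essentially the same route as the paper's proof: choose $v^*$ attaining $\langle v^*, \E[Z]\rangle = \lVert \E[Z]\rVert$, apply the one-dimensional resilience machinery to the projections $\langle Z_i, v^*\rangle$, and transfer to the norm-ordered statistics via $\langle Z_i, v^*\rangle \leq \lVert Z_i\rVert$ together with monotonicity of order statistics under pointwise domination. The only difference is organizational — you invoke Theorem~\ref{thm:resilient} as a black box where the paper re-derives the binomial and resilience steps inline — and your explicit care with the whole-sample order-statistic comparison is exactly the right point to be careful about.
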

\begin{proof}[Proof of Theorem~\ref{thm:multi-dim}] See Appendix.
\end{proof}
% Suppose we have a classifier $f: \Zc \to \Yc$ and a random variable $Z, Y$, and denote $Z = f(Z)-Y$, then
% \begin{align*} 
% \lVert \E[f(Z)] - \E[Y] \rVert &= \sup_{v, \lVert v \rVert_* = 1} \langle v, \E[f(Z)] - \E[Y] \rangle \\
% &= \sup_{v, \lVert v \rVert_* = 1} \langle v, \E[Z] \rangle
% % &\leq \Eb\left[ \sup_{v, \lVert v \rVert_* = 1} \langle v, \E[f(Z)] - \E[Y] \rangle \right] \\
% % &\leq \Eb\left[ \lVert f(Z) - Y \rVert \right]
% \end{align*}
% and we denote the $v$ that realizes the supremum as $v^*$. 

% \begin{lemma} The following distributions are resilient
% \begin{enumerate}
%     \item If $\E[\Vert Z - \E[Z] \rVert_2^k] \leq \sigma^k$, then it is $\frac{\sigma}{(1 - \epsilon)^{1/k}}$-resilient. 
%     \item If $\lVert Z - \E[Z] \rVert$ is $\sigma^2$ sub-Gaussian
% \end{enumerate}
% \end{lemma}

\subsection{Rate Comparison}
\label{sec:comparison}
Table~\ref{table:rate} summarizes the asymptotic performance of our method compared to the baselines. Even though we consider the low sample setup (i.e. $m$ is small), these asymptotic rates still provide insight into the trade-off between different methods.

In particular, as shown in Proposition~\ref{prop:validity} our method has two terms that determine the confidence interval: $r$ and $\frac{Z_{(m-k)} - Z_{(1+k)}}{2}$. 
\begin{align*}
    \Pr\left[\lvert \hat{\mu} - \E[Y] \rvert > r + \frac{Z_{(m-k)} - Z_{(1+k)}}{2} \right] \leq \zeta.
\end{align*}
The latter term we will denote as $B$ and it is determined by the quality of the control variate. For baseline methods there is only a single term $c$ that determines the size of the confidence interval:
\begin{align*}
    \Pr\left[\lvert \hat{\mu} - \E[Y] \rvert > c \right] \leq \zeta.
\end{align*}
In Table~\ref{table:rate}, we show that under each class of assumptions, $r$ in our proposed algorithm always has a better rate compared to $c$ (i.e. it is smaller when $m, \zeta$ are sufficiently large). Whether the improvement can justify the additional term $B$ determines whether our algorithm performs well in practice. 

\begin{table*}
\begin{center}
\begin{tabular}{c|c|c}
    Conditions on $Z$ & Bounded in $[a, b]$ & Finite $\Eb[Z^2]$   \\
    \hline
    Mean $\frac{1}{m} \sum_i Z_i$
        & $\Theta\left( \sqrt{\frac{1}{m}\log \frac{1}{\zeta}} \right)$ (Chernoff)
        & $\Theta\left( \frac{1}{\sqrt{m \zeta}}\right)$ (Chebyshev) \\
    \hline
    Maximum (Minimum) $Z_{(m)}, Z_{(1)}$
        & $B + O\left( \frac{1}{m}\log \frac{1}{\zeta} \right)$
        & $B + O\left( \frac{1}{\sqrt{m}}\log \frac{1}{\zeta} \right)$ \\
    \hline
    $k$-th largest (smallest) $Z_{(m-k)}, Z_{(1+k)}$
        & $B + O\left( \frac{k\log m}{m} \log \frac{1}{\zeta} \right)$
        & $B + O\left( \frac{k\log m}{\sqrt{m}} \log \frac{1}{\zeta} \right)$
\end{tabular}
\end{center}
\caption{Summary of asymptotic size of the confidence interval for estimation algorithms using different concentration inequalities. $B$ is some value that corresponds to the bias of our method because of the $\frac{Z_{(m-k)} - Z_{(1+k)}}{2}$ term in Proposition~\ref{prop:validity}.}
\label{table:rate}
\end{table*}

% \subsection{Several Alternatives}

% An alternative estimator is $\mean = \frac{1}{m} \sum_{i} Z_i$, and $\lvert \mean - \E[Z] \rvert$ is bounded by Chernoff/Hoeffding and Chebychev inequalities. %Here we consider the case where $Z \in [a, b]$ or $\E[Z^2]$ is bounded. 

% \textbf{Chernoff} 
% \begin{lemma} Chernoff for sub-Gaussian
% \end{lemma}

% \textbf{Hoeffding} If we assume that $Z$ is bounded in $[0, b]$, then the following inequality is very tight when $\E[Z]$ is close to 0.
% \begin{lemma}[Hoeffding]
% If $Z$ is bounded in $[0, b]$, then
% \begin{align*}
%     \Pr\left[ \E[Z] \geq \mean + c \right] &\leq e^{-\KL(\mean  \Vert \mean + c) n}
% \end{align*}
% \end{lemma}
% In particular, Hoeffding inequality achieves super-efficiency when $\mean \to 0$ because 

% \textbf{Chebychev} 

% We can apply this to our setup. Assuming that $Y$ is bounded in $[0, 1]$, with probability $> 1 - 2\zeta$,
% $$
%     \E[Y] \in \left[
%         \bar{Y} - \sqrt{-\frac{\ln \zeta}{2m}},
%         \bar{Y} + \sqrt{-\frac{\ln \zeta}{2m}}
%     \right].
% $$

%\input{application.tex}
\section{Related Work}
%Confidence interval estimation has been long studied. Several notable results include \citep{hoeffding1994probability}

Extreme value theory~\citep{de2007extreme,castillo2012extreme} studies the probability of rare events or large deviation. Most results are asymptotic~\citep{de2007extreme} and are not applicable to our setup assuming small sample size. Several non-asymptotic results are also used in our proofs such as Eq.(\ref{eq:order_stats}).

The notion of resilience~\citep{steinhardt2017resilience,steinhardt2018robust} is most commonly used in analyzing robust estimation. Our paper draws the connection between resilience and order statistics concentration bounds. We hope this connection can be further exploited in future work to transfer results between the two fields of research.  %The difference here is that resilience study the change of expectation given a large subset, while our focus is on the tail property. (TODO: elaborate on this more)

A line of research related to ours is semi-supervised transfer learning and domain adaptation~\citep{daume2010frustratingly, donahue2013semi, kumar2010co, ding2018semi, lopez2012semi, Saito2019}. In both setups, we have a pretrained classifier or regressor; in the target domain, there is a small amount of labeled data. The difference is in the objective: domain adaptation use the labeled data to fine-tune our classifier or regressor, while our objective is confidence interval estimation on the target domain. The different objectives lead to different sets of tools and desiderata. 
%to obtain guarantees on the performance (no test set needed). Our method is especially applicable to situations where there is a large number of target domains, and obtaining a large test set for each domain is expensive. %{\color{red} Furthermore, almost all existing work has focused on classification tasks, whereas we analyze the setting where the label is a continuous random variable. }{\color{blue} This is actually a limitation of our method, existing methods can be easily extended to regression, while it is not clear how our method can be extended to classification.} 

% Interval estimation

\section{Experiments}

% \subsection{Synthetic Data}
% To better study how the different bounds compare, we use synthetic errors distributions. We use two distributions Gaussian $\mathcal{N}(0, \sigma^2)$ and Laplacian $\mathrm{Lap}(0, \sigma^2)$, where in both cases we choose $\sigma = 0.1$, and clip the samples between $[-1, 1]$.  

% \begin{figure*}
%     \centering
%     \begin{tabular}{c}
%     \includegraphics[width=0.9\linewidth]{} \\
%     \includegraphics[width=0.9\linewidth]{} \\
%     \includegraphics[width=0.9\linewidth]{}
%     \end{tabular}
%     \caption{Upper and Lower Bounds on Synthetic Distribution}
%     \label{fig:synthetic_laplace}
% \end{figure*}

\subsection{Certifying Regression Performance}
\label{sec:exp1}
\begin{figure*}
    \centering
    \begin{tabular}{cc}
    \includegraphics[width=0.38\linewidth]{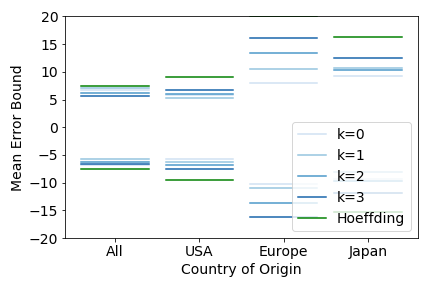} &  \includegraphics[width=0.38\linewidth]{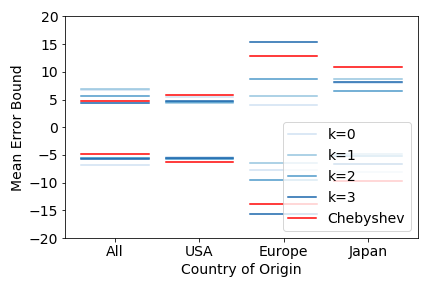} \\
    \includegraphics[width=0.38\linewidth]{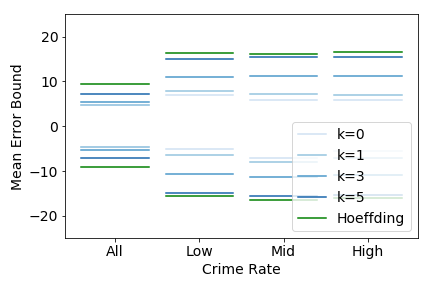} & \includegraphics[width=0.38\linewidth]{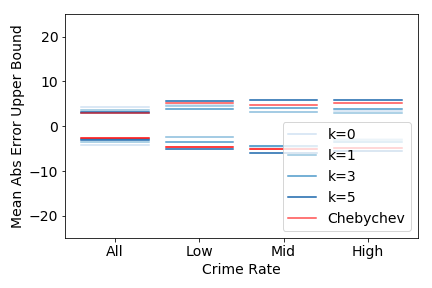} \\
    \end{tabular}
    \caption{Confidence intervals (both upper and lower bounds) on $\E[Z]$ from different estimation algorithms. For our algorithm, we try different values of $k$ (the $k$-th largest / smallest). For the MPG dataset, we also evaluate the regression error conditioned on different country of origin (top). For housing dataset, we evaluate the regression error conditioned on different crime rate level (bottom). In most of the experiments, our estimation algorithm performs better (outputs smaller confidence intervals). Chebychev sometimes performs better, especially with large sample size. Hoeffding always performs worse in this setup.}
    \label{fig:mpg}
\end{figure*}

\begin{figure*}
\centering
\includegraphics[width=0.8\linewidth]{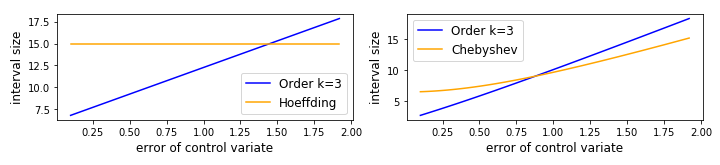}
\caption{Size of the confidence interval as a function of the error of the control variate. We scale up (>1.0) or scaled down (<1.0) the error (i.e. $Z_{\mathrm{new}} = \alpha Z_{\mathrm{old}}$ for $\alpha \in [0, 2]$). When the control variate has  smaller error the confidence interval is significantly better compared to Hoeffding or Chebychev.}
\label{fig:bound_scale}
\end{figure*}

Our first task is to upper and lower bound the difference between the output of a regression function $\tY$ and the true target attribute $Y$. For this task, our goal is to bound the expected error $Z = Y - \tY$ of the regression function. We are not directly interested in the expected value of the target attribute $\E[Y]$; instead we only want to show bounds $\mathrm{LB} \leq \E[Z] \leq \mathrm{UB}$. In addition, instead of a single global accuracy, we might care about accuracy for sub-groups in the data (e.g., based on some feature or sensitive attribute). In other words, let $U$ be some random variable taking a finite set of values, we want to know  the regression error $\E[Z \vert U=u]$ for each value of $U=u$. This can be important for fairness or identifying particular failure cases. 

If the regression function is accurate, $Z$ should be concentrated around $0$, making it feasible to obtain better bounds with order statistics. We compare the bounds of Section~\ref{sec:order_bounds} with the baseline bounds of Section~\ref{sec:baseline}. Code is available at https://github.com/ermongroup/ControlVariateBound

%In addition to bounding the regression error, 
% For several UCI regression datasets we use the output of a pretrained regression function as $\tY$. 

\textbf{Datasets}: We use two classic regression datasets in the UCI repository~\citep{asuncion2007uci}: Auto MPG, where the task is to predict the miles per gallon (MPG) of a vehicle based on 10 features; and Boston housing price prediction, where the task is to predict the housing price from 13 features. 

\textbf{Assumptions}: As explained in Section~\ref{sec:baseline}, all estimation algorithms require some assumptions. Here, we either assume bounded support or bounded variance. Optimal choice of these assumptions usually relies on domain knowledge about the problem and is beyond the scope of this paper. Here, we simply assume that the error is bounded by $\pm b/2$ where $b$ is the maximum MPG in the entire dataset. The reason is that any regression algorithm can trivially output $b/2$ and achieve this error. In the bounded variance case, we first compute an upper bound on $\E[Z^2]$ that holds with $1-\zeta/2$ probability by Hoeffding inequality as an upper bound on the variance. 

\textbf{Results:} The results are shown in Figure~\ref{fig:mpg}. Our order statistics bound works better in general if the number of test samples $m$ is small. Here both datasets contain approximately 100 test samples, and our bound performs on-par with Chebychev and better than Hoeffding. Our bound also performs better when the control variate is more accurate (i.e. $Z$ is concentrated around zero). This is empirically verified in  Figure~\ref{fig:bound_scale}. 

Our bound also depends on the choice of $k$. In general, with more data choosing a larger value of $k$ is preferable, and vice versa.

\begin{figure}[h]
    \centering
    \includegraphics[width=0.45\linewidth]{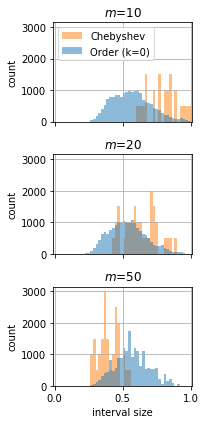}
    \includegraphics[width=0.45\linewidth]{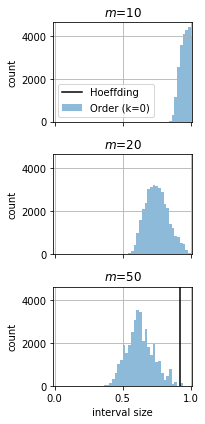}
    \caption{Histograms of 99\%-confidence interval sizes for the average household-level asset wealth within a province, for 36 provinces across 13 countries in DHS surveys of sub-Saharan Africa. 1000 random subsets of size $m$ are sampled from each province. We assume that household-level asset wealth is a random variable either with finite variance (left), or bounded in $[0, 1]$ (right). %In the finite-variance case (left), we calculate a separate upper bound on the variance in each of the 36 provinces. 
    The histograms show interval sizes pooled over all 36 test provinces.
    For small sample size per province (i.e. <20 samples per province) our method achieves smaller confidence intervals. With more samples, the bias of our estimation algorithm dominates and our method performs worse.}
    % The interval sizes using the min/max order statistics ($k=0$) of errors on the test province are shown in blue and vary based on the selected random subset. In contrast, the size of the Hoeffding interval is constant for a fixed $m$ and $\zeta$. For small $m$ and $\zeta$, the confidence interval derived from order statistics generally is tighter than the Hoeffding interval, although the opposite is true for larger $m$ and $\zeta$.}
    \label{fig:poverty_mean}
\end{figure}

% \begin{figure*}
%     \centering
%     \begin{tabular}{cc}

%     \end{tabular}
%     \caption{Upper and lower bounds on $\E[Z]$ from different estimation algorithms. For MPG dataset, we also evaluate the regression error conditioned on different country of origin (top) or number of cylinders (bottom). }
%     \label{fig:mpg}
% \end{figure*}

% \subsection{Certifying Classification Performance}

% \begin{figure}
%     \centering
%     \begin{tabular}{c}
%     \includegraphics[width=0.9\linewidth]{} \\
%     \includegraphics[width=0.9\linewidth]{}
%     \end{tabular}
%     \caption{Caption}
%     \label{fig:mpg}
% \end{figure}

\subsection{Poverty Estimation Task}
\label{sec:exp2}

We apply our estimation algorithm on a real world task, where we estimate the average household-level asset wealth across provinces of countries in 23 sub-Saharan African countries. We used DHS Survey collected between 2009-16 and constructed an average household asset wealth index for 19,669 villages following the procedure described in \cite{jean2016combining}.

\textbf{Setup}: We emulate the setup where we have survey results from several countries, and train a regressor (a convolutional neural network) to predict asset wealth from satellite images (multispectral Landsat and nighttime VIIRS). To estimate the average asset wealth for a new country, we apply this regressor to satellite images %of villages 
from that country; we use the output of the regressor as a control variate. %    for a new country we would like to estimate we apply the classifier as a control variate to improve estimation on other countries. We randomly select 
More specifically, we randomly pick 80\% of the countries to train the regressor, and test the performance of our estimation algorithm on the remaining countries. We also use cross validation to more accurately evaluate our performance. %We randomly sort the countries into 5 folds, training a convolutional neural network to estimate average household asset wealth in each village on 4 of the 5 folds . 

\textbf{Assumptions}: As in Section~\ref{sec:exp1}, for estimation with bounded random variables, we upper- and lower-bound household wealth by the maximum and minimum wealth across the entire dataset. For estimation with bounded moment random variables, we use the empirical standard deviation estimated across the entire country, multiplied by an additional margin of 1.5$\times$. Because of the small sample size, Chernoff bounding the standard deviation as in Section~\ref{sec:exp1} is infeasible.  %Chernoff bound to provide an upper bound to the variance with 99\% confidence. 

\textbf{Results}: The results are shown in Figure~\ref{fig:poverty_mean}. Although our regression model is trained on all 23 countries in our dataset, we only test our method on the 36 provinces across 13 countries from which we have at least 90 labeled survey examples. Compared to Chernoff bound or Chebychev our estimation algorithm perform better when sample size is small, and worse when sample size is large. Because of the difficulty of predicting wealth from satellite images, the control variate is not very accurate, and further improvements are possible with improved prediction accuracy.

% When applying our decision rule to estimate average household asset wealth in each country from $m = 15$ randomly-selected villages per test country, our 90\%-confidence interval (with $k=0$) is smaller than Hoeffding's confidence interval in 39.6\% of our random trials. When estimating the same quantity within each province, our 90\%-confidence interval is smaller 44.2\% of the time.

\subsection{Covariance Estimation}
The DHS surveys also include other demographic variables besides household asset wealth. Policy makers may be interested in the covariance of these demographic quantities, such as between maternal education level and household asset wealth.

More formally let $W$ be the random variable that represents the average level of maternal education in a village, and $U$ represent the average household asset wealth in the same village. The random variable we actually want to estimate is $Y = UW - U\E[W]$ because  
\[
    \E[Y] = \E[UW] - \E[U]\E[W] = \mathrm{Cov}(U, W) 
\]
We assume that $W$ is a quantity that is easy to survey, or has available data, so $\E[W]$ is known. This is commonly the case, when certain demographic variables are more widely surveyed than others. As before, we can train a regressor to predict $U$ from satellite images, and we denote it's output as $\tilde{U}$. Our control variate is then $\tY = \tilde{U}W - \tilde{U}\E[W]$. By using these new definitions for $Y$ and $\tY$ we can apply our estimation algorithm.

\textbf{Setup}: The setup is identical to Section~\ref{sec:exp2}, where we train the asset wealth regression model on 80\% of the countries and test our estimation algorithm on the remaining countries. However, we only have maternal education survey data on 9 countries, so we only estimate confidence intervals of covariance in these countries. Maternal education level is measured at each household on an integer scale from 0 to 3, then averaged within each village. All the other assumptions are also identical to before.

\textbf{Result}: The results are shown in Figure~\ref{fig:poverty_cov}. As expected, our estimation algorithm achieves superior performance compared to baseline estimators when the sample size $m$ is small.

% we also estimate the covariance between 
% In addition to setting previously in Section~\ref{sec:control_variates}, we can additionally consider the case where we have a third random variable $W \in \R$, and suppose that we have a sample of $W$ corresponding to each sample of $X$. Our goal is to learn the covariance between $Y$ and $W$.

% We can pose this estimation problem as a mean estimation task as follows. Defining $Z = YW - Y \E[W] = Y(W - \E[W])$, we have
% \begin{align*}
%     \Cov[Y, W]
%     &= \E[YW] - \E[Y]\E[W] \\
%     &= \E\left[YW - Y \E[W]\right]
%      = \E[Z]
% \end{align*}

% Letting $\hat{Z} = f(X) (W - \E[W])$ and letting $\epsilon = \hat{Z} - Z$, we can directly apply the mean estimation technique for getting a confidence interval for the covariance, $\E[Z]$.

\begin{figure}
    \centering
    \includegraphics[width=0.45\linewidth]{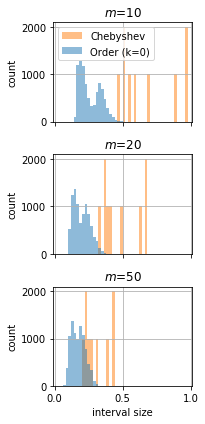}
    \includegraphics[width=0.45\linewidth]{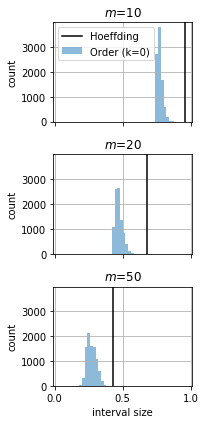}
    \caption{Histograms of confidence interval sizes at $\zeta = 0.01$ for the covariance between maternal education and household asset wealth, for 9 countries in DHS surveys of sub-Saharan Africa. 1000 random subsets of size $m$ are sampled from each province. The confidence interval derived from order statistics outperforms both the Hoeffding interval and the Chebychev interval.}
    \label{fig:poverty_cov}
\end{figure}
\section{Conclusion}

In this paper we propose a framework for estimating the confidence interval given a control variate random variable as side information. We show that under certain conditions on the control variate, the estimation algorithms out-performs classic minimax optimal estimation algorithms both asymptotically and empirically.

A major weakness of the estimator is diminished performance when we have a large number of samples. Because of no-free-lunch results, trade-offs are unavoidable, but it is an interesting direction of future research to find either better trade-offs or prove its impossibility.

\section{Acknowledgments}

This research was supported by TRI, Intel, NSF (\#1651565, \#1522054, \#1733686), and ONR. The survey datasets are generously provided by Marshall Burke at Stanford University. 

\bibliography{main}

\newpage
\newpage
\section{Proofs}

\orderbound*
\begin{proof}[Proof of Proposition~\ref{prop:validity}]
Whenever $\E[Z] \leq Z_{(m-k)} + r$ and $\E[Z] \geq  Z_{(1+k)} - r$, we have
\begin{align*}
    &\lvert \hat{\mu} - \E[Y] \rvert = \left\lvert \E[Z] - \frac{Z_{(m-k)} + Z_{(1+k)}}{2} \right\rvert \\
    &= \max\left(\E[Z] - \frac{Z_{(m-k)} + Z_{(1+k)}}{2}, \right. \\ &\qquad\qquad \left. \frac{Z_{(m-k)} + Z_{(1+k)}}{2} - \E[Z] \right) \\
    &\leq \max\left(Z_{(m-k)} + r - \frac{Z_{(m-k)} + Z_{(1+k)}}{2}, \right. \\ &\qquad\qquad \left. \frac{Z_{(m-k)} + Z_{(1+k)}}{2} - Z_{(1+k)} + r \right) \\
    &= \frac{Z_{(m-k)} - Z_{(1+k)}}{2} + r
\end{align*}
In other words, we cannot have 
\[\lvert \hat{\mu} - \E[Y] \rvert > \frac{Z_{(m-k)} - Z_{(1+k)}}{2} + r \]
unless either $\E[Z] >Z_{(m-k)} + r$ or $\E[Z] <  Z_{(1+k)} - r$ is true, which implies that
\begin{align*}
    &\Pr\left[\lvert \hat{\mu} - \E[Y] \rvert > r + \frac{Z_{(m-k)} - Z_{(1+k)}}{2} \right] \\
    &\leq \Pr[\E[Z] > Z_{(m-k)} + r \text{ or } \E[Z] < Z_{(1+k)} - r] \\
    &\leq \Pr[\E[Z] > Z_{(m-k)} + r] + \Pr[\E[Z] < Z_{(1+k)} - r] \\
    &\leq \zeta
\end{align*}
\end{proof}

\singledim*
\begin{proof}[Proof of Theorem~\ref{thm:resilient}]
Before proving Theorem~\ref{thm:resilient}, we first need a Lemma.

\begin{lemma}
\label{lemma:single_bound}
If $Z$ is $s$-resilient from above, then $\forall \delta \in (0, 1)$,
$$
    \Pr\left[Z \leq \E[Z] - s(\delta)\frac{1-\delta}{\delta}\right] \leq \delta.
$$
\end{lemma}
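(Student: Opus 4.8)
The plan is to prove the bound with a single application of resilience from above, combined with the tower property. Fix $\delta \in (0,1)$ and write $t = s(\delta)\frac{1-\delta}{\delta}$. Let $\tau$ be the $\delta$-quantile of $Z$, and let $B$ be the ``upper'' event obtained by discarding the bottom $\delta$ fraction of the mass, so that $\Pr[B] = 1-\delta$ and $B^c = \{Z \le \tau\}$ (up to splitting an atom at $\tau$). Because $\Pr[B] = 1-\delta \ge 1-\delta$, resilience from above applies with $\epsilon = \delta$ and yields $\E[Z \mid B] - \E[Z] \le s(\delta)$.

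Next I would push this through the tower property $\E[Z] = (1-\delta)\,\E[Z \mid B] + \delta\,\E[Z \mid B^c]$. Substituting the resilience bound $\E[Z \mid B] \le \E[Z] + s(\delta)$ and rearranging gives a lower bound on the mean of the discarded bottom fraction, namely $\E[Z \mid B^c] \ge \E[Z] - \frac{1-\delta}{\delta}s(\delta) = \E[Z] - t$. This is the crux of the argument: resilience forbids the bottom $\delta$ of the distribution from sitting too far below the mean.

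Finally I would convert the conditional-mean bound into the desired tail bound. Since every value in $B^c$ is at most the quantile $\tau$, we have $\tau \ge \E[Z \mid B^c] \ge \E[Z] - t$; monotonicity of the CDF then gives $\Pr[Z \le \E[Z] - t] \le \Pr[Z < \tau] \le \delta$, which is exactly the claim.

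The main obstacle is conceptual rather than computational: one must fix the discarded mass at $\epsilon = \delta$ and apply resilience to the fixed bottom-$\delta$ tail, rather than to the complement of the event $\{Z \le \E[Z] - t\}$ whose probability $p$ is the unknown quantity. Taking $\epsilon = p$ forces the weaker bound $s(p)$, and the argument then reduces to requiring $\frac{(1-x)s(x)}{x}$ to be monotone in $x$, which need not hold for a general non-decreasing $s$; fixing $\epsilon = \delta$ decouples $s$ from the unknown and sidesteps this entirely. The only remaining technical point is the measure-theoretic boundary: when $Z$ has an atom at $\tau$ one forms $B^c$ by splitting that atom (e.g. via an auxiliary independent uniform) so that $\Pr[B^c] = \delta$ exactly, and treats the borderline case $\E[Z] - t = \tau$ separately, where resilience itself excludes the configurations that would otherwise violate the bound.
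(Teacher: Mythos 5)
Your argument is correct for atomless $Z$ and takes a genuinely different route from the paper's. The paper conditions directly on the event $B_r=\{Z\le \E[Z]-r\}$: writing $\delta_r=\Pr[B_r]$, it applies resilience at level $\epsilon=\delta_r$ to $\bar{B}_r$, uses the tower property to get $r\le s(\delta_r)\frac{1-\delta_r}{\delta_r}$, and then concludes. That is precisely the ``$\epsilon=p$'' variant you warn against: to pass from $g(\delta)=r\le g(\delta_r)$, where $g(x)=s(x)\frac{1-x}{x}$, to the desired $\delta_r\le\delta$, one needs $g$ to be decreasing, which the paper never verifies and which does not follow from $s$ being non-decreasing. (It does hold for the three concrete resilience functions of Lemma~\ref{lemma:example_resilient}, which is presumably why the gap is invisible downstream.) Your quantile-based decomposition --- fix $B$ as the top $1-\delta$ of the mass, lower-bound $\E[Z\mid B^c]$ via the tower property, and compare with $\tau$ --- decouples the resilience level from the unknown tail probability and avoids the monotonicity requirement entirely. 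That is a genuine improvement over the paper's own argument.

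The weak point is the one you half-acknowledge: atoms. First, Definition~\ref{def:resilience} quantifies over measurable $B\subseteq\Zc$, so the atom-split event (which lives in a space enlarged by an auxiliary uniform) is not covered by the hypothesis as stated; you would need to either restate resilience for fractional sets or restrict to continuous $F$. Second, your claim that ``resilience itself excludes'' the violating borderline configurations is false. Take $Z=\pm1$ with probability $1/2$ each, so $\E[Z]=0$: for $\epsilon<1/2$ every admissible $B\subseteq\Zc$ has probability $1$, so $Z$ is $s$-resilient from above with $s(\epsilon)$ arbitrarily small on $[0,1/2)$, yet for $\delta$ just below $1/2$ we get $\Pr\left[Z\le -s(\delta)\frac{1-\delta}{\delta}\right]=\Pr[Z=-1]=1/2>\delta$. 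So the borderline case is not a removable technicality: the lemma as stated actually fails there, and the paper's proof breaks on the same example (its final ``which implies'' step requires $g$ strictly decreasing, which this $s$ violates). The honest fix is to state your proof for continuous $F$, or for $s$ with $s(x)(1-x)/x$ decreasing, where your argument is complete and cleaner than the paper's.
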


For some $r \in \R$, let $B_r$ be the event $Z \leq \E[Z] - r$. 
% \begin{align*}
%     \Pr(Z_{(m)} \leq \E[Z] - c)
%     &= \prod_{i=1}^m \left(1 - P(B_i)\right) \\
%     &\leq \left(1 - \frac{c}{b - \E[Z] + c}\right)^m
% \end{align*}
% % Because the RHS is monotonically decreasing with $\E [Z]$, then 
% % \begin{align*}
% %     \Pr[\E[Z] \geq Z_{(m)}+c] \leq \left(1 - \frac{c}{b - Z_{(m)}}\right)^m
% % \end{align*}
% 
% \begin{align*}
%     c
%     &= (1 - \zeta^{1/m})(b - \E[Z] + c) \\
%     &= \zeta^{-1/m}(1 - \zeta^{1/m})(b - \E[Z]) \\
%     &= (\zeta^{-1/m} - 1)(b - \E[Z])
% \end{align*}
%Similarly we have 
\begin{align*}
    &\Pr\left[Z_{(m-k)} \leq \E[Z] - r\right] \\
    &= \Pr\left[ \sum_{j=0}^m \mathbb{I}(Z_j \not\in B_r) \leq k \right] \\
    &= \sum_{i=0}^k \binom{m}{i} \Pr[B_r]^{m-i} (1 - \Pr[B_r])^i \numberthis\label{eq:order_stats} \\
    % &\leq \sum_{i=0}^k \binom{m}{i} \Pr[B_c]^{m-i} \\
    &\leq \left( \sum_{i=0}^k \binom{m}{i}(1 - \Pr[B_r])^{i} \right) \Pr[B_r]^{m-k} \\
    &\leq \left( \sum_{i=0}^k (m(1 - \Pr[B_r]))^{i} \right) \\
    &\leq (m(1 - \Pr[B_r])+1)^k \Pr[B_r]^{m-k} 
\end{align*}
% Ddenote $\bar{v}$ as the smallest value for $\Pr[B_c]$ where Eq.(\ref{eq:rhs}) $\leq \zeta$. If $k \leq $

Now we will show that the algorithm chooses $v_i$, $i=0, \dotsc, T$ such that 
\begin{align*}
    (m(1 - v_i)+1)^k v_i^{m-k} \leq \zeta \numberthis\label{eq:rhs} 
\end{align*}

To show this we first show that $v_{i+1} \geq v_i$, $\forall i = 0, \dotsc, T-1$. This is obviously true for $i=0$ because $0 \leq v_0 \leq 1$ so 
\begin{align*}
   v_0 &= \stuffn \\
   &\leq \left(\frac{\zeta}{(m(1-v_0)+1)^k}\right)^{\frac{1}{m-k}}= v_1
\end{align*}
and we can proceed to use induction and conclude 
\begin{align*}
    v_i &= \left(\frac{\zeta}{(m(1-v_{i-1})+1)^k}\right)^{\frac{1}{m-k}} \\
    &\leq \left(\frac{\zeta}{(m(1-v_{i})+1)^k}\right)^{\frac{1}{m-k}} = v_{i+1}
\end{align*}

Now we use induction to show that Eq.(\ref{eq:rhs}) is true.
First at iteration 0 this is true because 
% Setting the RHS $= \zeta$, we get 
% \begin{align*}
%   &k \log m + (m-k)\log \Pr[B_c] \leq \log \zeta 
% \end{align*}
% which is equivalent to
\begin{align*}
(m(1 - v_0)+1)^k v_0^{m-k} \leq (m+1)^k v_0^{m-k} 
\end{align*}
which is less than $\zeta$ as long as 
\begin{align*}
   v_0 \leq \stuffn 
\end{align*}

Suppose Eq.(\ref{eq:rhs}) is true at iteration $i$, then at iteration $i+1$ it is still true. Denote the values as $v_i$ and $v_{i+1}$. Because we choose $v_{i+1}$ such that 
\begin{align*}
    (m(1-v_i)+1)^k v^{m-k}_{i+1} = \zeta
\end{align*}
Observe that $v_{i+1} \geq v_i$ it must be true that
\begin{align*}
    (m(1-v_{i+1})+1)^k v^{m-k}_{i+1} \leq \zeta
\end{align*}

After we have established Eq.(\ref{eq:rhs}) we can apply Lemma~\ref{lemma:single_bound} to achieve $\Pr[B_r] \leq v$ it suffices to have
$$
    r \geq s\left( v \right) \left(v^{-1} - 1 \right)
$$
%Without loss of generality we can assume that $s$ is a monotonically increasing function. We can conclude that

What remains is to prove Lemma~\ref{lemma:single_bound} 
\begin{proof}[Proof of Lemma~\ref{lemma:single_bound}]
Let $B_r$ be the event $Z \leq \E[Z] - r$ for any $r \geq 0$. Suppose it is true that $\Pr[B_r] = \delta_r$ for some $\delta_r \in (0, 1)$, we have
\begin{align*}
\E[Z] &= \E[Z\vert B_r]\Pr[B_r] + \E[Z \vert \bar{B}_r] (1 - \Pr[B_r]) \\
&\leq (\E[Z] - r)\delta_r + (\E[Z] + s(\delta_r))(1 - \delta_r)
\end{align*}
which implies that
$$
    r \leq s(\delta_r)\frac{1 - \delta_r}{\delta_r}
$$
which implies
$$
    \Pr\left[Z \leq \E[Z] - s(\delta)\frac{1-\delta}{\delta}\right] \leq \delta
$$
\end{proof}
\end{proof}

\asymptote*
\begin{proof}[Proof of Corollary~\ref{cor:asymptote}]
Denote $\epsilon = \stuffn$. We know that 
\begin{align*} 
\lim_{m \to \infty} \log \epsilon &= \frac{1}{m-k} \left( \log \zeta - k\log(m+1) \right) = 0
\end{align*}
which implies $\lim_{m \to \infty} \epsilon = 1$, so $\forall a < 0$ 
there must exist a $M$ such that $\forall m \geq M$, $(1 - \epsilon)^a + b \leq 2(1 - \epsilon)^a$, and $\epsilon > 1/2$
\begin{align*}
    r &\leq 2(1 - \epsilon)^a (\epsilon^{-1} - 1) = \frac{2(1 - \epsilon)^{1+a}}{\epsilon} \leq 4(1 - \epsilon)^{1+a} \\
\end{align*}
Observe that $\epsilon < 1$ so $1 - \epsilon < -\log \epsilon$ so we have (for sufficiently large $m$)
\begin{align*}
    r &\leq 4(-\log \epsilon)^{1+a} \\
    &= 4 \left( -\frac{1}{m-k} \log \zeta + \frac{k}{m-k} \log (m+1)\right)^{1+a} \\
    &\leq \frac{5}{m^{1+a}} \left( \log \frac{1}{\zeta} + k \log m \right)^{1+a} \\
    &\leq \frac{5}{m^{1+a}} \left( \log \frac{1}{\zeta} + k \log m \right)
\end{align*}
Now we only need the special case of $a = 0$, where 
\begin{align*} 
r \leq (1+b) (\epsilon^{-1}-1) \leq 2(1+b) (1 - \epsilon) 
\end{align*}
and the proof will follow as before. 
\end{proof}

\resilient*
\begin{proof}[Proof of Lemma~\ref{lemma:example_resilient}]
Part 1 is trivial to prove. Part 2 is proved in \citet{steinhardt2018robust}, Example 2.7. Part 3 is proven here.
% \begin{align*}
%     \E[Z|B] - \mu &= \frac{\E[(Z - \mu)\mathbb{I}(B)]}{\Pr[B]} 
%     \leq \frac{\sqrt{\E[(Z-\mu)^2]\E[\mathbb{I}(B)]}}{\Pr[B]} \\
%     &= \frac{\sigma}{\sqrt{\Pr[B]}} \leq \frac{\sigma}{\sqrt{1-\epsilon}}
% \end{align*}

Let $F$ denote the CDF of $Z$. For convenience, let $\bare = 1 - \epsilon$, $\tau = F^{-1}(\bare)$, and without loss of generality assume $\Eb[Z] = 0$. We first consider lower bounds on $\Eb[Z \mid Z \in B]$ where $B$ is any subset of $\Zc$ such that $\Pr[Z \in B] \geq \bare$. It is easy to see that for any such $B$ we have
\[ \Eb[Z \mid Z \leq \tau] \leq \Eb[Z \mid Z \in B] \]
so we only have to provide a lower bound for $\Eb[Z \mid Z \leq \tau]$. Without loss of generality we can also assume $\tau \leq 0$ because suppose $\tau > 0$ then consider an alternative random variable $\tilde{Z}$ defined by $\tilde{Z} = \max(Z, 0)$. Then $\tilde{Z}$ is $\sigma^2$ sub-Gaussian, and 
\[ \Eb[Z \mid Z \leq \tau] \geq \Eb[\tilde{Z} \mid \tilde{Z} \leq \tau] \]
Then we can provide a lower bound for $\Eb[\tilde{Z} \mid \tilde{Z} \leq \tau]$ instead. Given the above setup we have
% and suppose $\tau < 0$.
\begin{align*}
    & \bare\, \E[Z \vert Z \leq \tau] \\
    &= \int_{x=-\infty}^{\tau} x F'(x)\, dx
     = \int_{x=-\infty}^{\tau} F'(x) \int_{y = x}^0 (-1)\, dy\, dx \\
    &= -\int_{x=-\infty}^{\tau} \int_{y = x}^0 F'(x)\, dy\, dx \\
    &= -\int_{x=-\infty}^{\tau} \int_{y = -\infty}^0 \mathbb{I}(y > x) F'(x)\, dy\, dx \\
    &= -\int_{y=-\infty}^0 \int_{x=-\infty}^{\tau} \mathbb{I}(y > x) F'(x)\, dx\, dy \\
    &= -\int_{y=\tau}^0 \int_{x=-\infty}^{\tau} \mathbb{I}(y > x) F'(x)\, dx \, dy  \\
    &\qquad - \int_{y=-\infty}^{\tau} \int_{x=-\infty}^{\tau} \mathbb{I}(y > x) F'(x)\, dx\, dy \\
    &= -\int_{y = \tau}^0 \bare\, dy
     - \int_{y = -\infty}^{\tau} \int_{x=-\infty}^{y} F'(x)\, dx\, dy \\
    &= \bare \tau - \int_{y=-\infty}^{\tau} F(y)\, dy \\
    &= \bare \tau - \int_{x=-\infty}^{\tau} F(x)\, dx
\end{align*}

% \begin{align*}
%     \Eb[Z \mid 0 \leq Z \leq \tau] = 
% \end{align*}
Let $ \tF(x) = e^{-\frac{x^2}{2\sigma^2}} $. Since $Z$ is $\sigma^2$ sub-Gaussian, by Chernoff bound we know that $\forall x < 0.\ F(x) \leq \tF(x)$, which also implies that whenever $F^{-1}(\bare) < 0$, $\tF^{-1}(\bare) \leq F^{-1}(\bare)$, Then 
\begin{align*}
    &\bare \E[Z \vert Z \leq F^{-1}(\bare)] \\
    &= \bare F^{-1}(\bare) - \int_{x = -\infty}^{F^{-1}(\bare)} F(x) dx \\
    &= \bare F^{-1}(\bare) - \int_{x = -\infty}^{\tF^{-1}(\bare)} F(x) dx - \int_{x = \tF^{-1}(\bare)}^{F^{-1}(\bare)} F(x) dx  \\
    &\geq \bare F^{-1}(\bare) - \int_{x = -\infty}^{\tF^{-1}(\bare)} \tF(x) dx - \int_{x = \tF^{-1}(\bare)}^{F^{-1}(\bare)} \bare dx \\
    &= \bare \tF^{-1}(\bare) - \int_{x = -\infty}^{\tF^{-1}(\bare)} \tF(x) dx
\end{align*}
Finally, denote $\phi_\sigma(x)$ as the PDF of $\Nc(0, \sigma^2)$ and $\Phi_\sigma$ be its CDF, we have 
\begin{align*}
    \int_{x = -\infty}^{\tF^{-1}(\bare)} \tF(x) dx &= \sqrt{2\pi}\sigma \int_{x = -\infty}^{\tF^{-1}(\bare)} \phi_\sigma(x) dx \\
    &= \sqrt{2\pi}\sigma \Phi_\sigma(\tF^{-1}(\bare))  \leq \sqrt{2\pi}\sigma\bare
\end{align*}
Combining these results we get 
\begin{align*}
    &\E[Z \vert Z \leq F^{-1}(\bare)] \\
    &\geq \tF^{-1}(\bare) - \sqrt{2\pi}\sigma = -\sqrt{2\sigma\log\frac{1}{\bare}} - \sqrt{2\pi}\sigma
\end{align*}

\end{proof}

% \begin{proof}[Proof of Corollary~\ref{cor:bounded_example}]
% \begin{align*}
%     \Pr\left(Z_{(1+k)} \geq a + \stuff (\E[Z] - a)\right) &\leq \zeta \\
%     \Pr\left(Z_{(m-k)} \leq b - \stuff (b - \E[Z])\right) &\leq \zeta
% \end{align*}
% Re-arranging yields
% \begin{align*}
%     \Pr&\left(\E[Z] \leq Z_{(1+k)} - \right. \\
%     & \qquad \left. \left(1 - \stuffn \right) (Z_{(1+k)} - a ) \right) \leq \zeta \\
%     \Pr&\left(\E[Z] \geq Z_{(m-k)} + \right. \\
%     & \qquad \left. \left(1 - \stuffn \right) (b - Z_{(m-k)}) \right) \leq \zeta.
% \end{align*}
% \end{proof}

\boundedexample*
\begin{proof}[Proof of Corollary~\ref{cor:bounded_example}]
By Theorem~\ref{thm:resilient} we have
\begin{align*}
    \zeta \geq &\Pr\left(Z_{(1+k)} \geq \E[Z] + (\E[Z] - a) (v_T^{-1} - 1) \right)\\
    & = \Pr\left(Z_{(1+k)} \geq v^{-1}_T \E[Z] - v^{-1}_T a + a \right) \\
    &= \Pr\left(v_T Z_{(1+k)} \geq v_T a -  a + \E[Z]  \right) \\
    &= \Pr\left( \E[Z] \leq a + v_T(Z_{(1+k)} - a) \right)
    % \Pr\left(Z_{(m-k)} \leq b - v_T (b - \E[Z])\right) &\leq \zeta
\end{align*}
Similarly we can conclude
\begin{align*}
    \zeta \geq \Pr\left( \E[Z] \geq b - v_T(b - Z_{(m-k)}) \right)
    % \Pr\left(Z_{(m-k)} \leq b - v_T (b - \E[Z])\right) &\leq \zeta
\end{align*}
% \begin{align*}
%     \Pr&\left(\E[Z] \leq Z_{(1+k)} - \right. \\
%     & \qquad \left. \left(1 - \stuffn \right) (Z_{(1+k)} - a ) \right) \leq \zeta \\
%     \Pr&\left(\E[Z] \geq Z_{(m-k)} + \right. \\
%     & \qquad \left. \left(1 - \stuffn \right) (b - Z_{(m-k)}) \right) \leq \zeta.
% \end{align*}
\end{proof}

\multidim*
\begin{proof}[Proof of Theorem~\ref{thm:multi-dim}]
Let 
\[
    v^* = \argsup_{v, \vdual=1} \langle v, \E[Z] \rangle
\]
then $\lVert \E[Z] \rVert = \langle v^*, \E[Z] \rangle$. Let $\tZ_{(1)}, \dotsc, \tZ_{(m)}$ be ranked such that
\[
    \langle v^*, \tZ_{(1)} \rangle \leq \dotsc \leq \langle v^*, Z_{(m)} \rangle.
\]
Denote the event $B \subset \Zc$ as $\{ Z, \langle v^*, Z \rangle \leq \langle v^*, \E[Z] \rangle - r\}$ as before we have
\begin{align*}
    \Pr\left[ \langle v^*, \tZ_{(m-k)} \rangle \leq \langle v^*, \E[Z] \rangle - r \right] \leq  (m+1)^k \Pr[B]^{m-k}
\end{align*}
and we set the RHS $\leq \zeta$. It suffices to have 
\[ \Pr[B] \leq \stuffn \]
Similar to Lemma~\ref{lemma:single_bound}, denote $\delta = \Pr[B]$
\begin{align*}
    &\langle v^*, \E[Z] \rangle = \E[ \langle v^*, Z \rangle] \\
    &= \E[ \langle v^*, Z \rangle \mid B]\Pr[B] + \E[ \langle v^*, Z \rangle \mid \bar{B}](1 - \Pr[B]) \\
    &\leq (\langle v^*, \E[Z] \rangle - r) \delta +  (\langle v^*, \E[Z] \rangle + s(\delta))(1 - \delta)
\end{align*}
which implies that $\Pr[B] \leq \delta$ when $r \geq s(\delta)\frac{1-\delta}{\delta} = s(\delta)(\delta^{-1} - 1)$. When this is true we have
\begin{align*}
    \Pr[\langle v^*, \tZ_{(m-k)} \rangle \leq \lVert \E[Z] \rVert - r] \leq \zeta
\end{align*}
If we also rank $Z_{(1)}, \dotsc, Z_{(m)}$ by
\begin{align*}
    \lVert Z_{(1)} \rVert \leq \cdots \leq \lVert Z_{(m)} \rVert
\end{align*}
we have $\forall i \geq m-k$
\begin{align*}
    \langle v^*, \tZ_{(m-k)} \rangle \leq \langle v^*, \tZ_{(i)} \rangle \leq \lVert \tilde{Z}_{(i)} \rVert
\end{align*}
so there are at least $k$ samples $Z_{(i)}$ with norm at least $\langle v^*, \tZ_{(m-k)} \rangle$, and we can conclude that $ \langle v^*, \tZ_{(m-k)} \rangle \leq \lVert Z_{(m-k)} \rVert$ which implies 
\begin{align*}
     \Pr[\lVert Z_{(m-k)} \rVert \leq \lVert \E[Z] \rVert - r] \leq \zeta
\end{align*}
\end{proof}

\end{document}